\newcommand{\x}{\mathbf{x}}
\newcommand{\y}{\mathbf{y}}
\newcommand{\z}{\mathbf{z}}
\newcommand{\R}{\mathbb{R}}
\newcommand{\name}{\textsc{HELM}\xspace}
\newcommand{\moename}{\textsc{HELM-MiCE}\xspace}
\newcommand{\densename}{\textsc{HELM-D}\xspace}
\definecolor{custom_green}{HTML}{27ae60}
\theoremstyle{plain}
\newtheorem{theorem}{Theorem}[section]
\newtheorem{proposition}[theorem]{Proposition}
\newtheorem*{proposition*}{Proposition}
\theoremstyle{definition}
\theoremstyle{remark}
\title{\raisebox{-0.3\height}{\includegraphics[height=3ex]{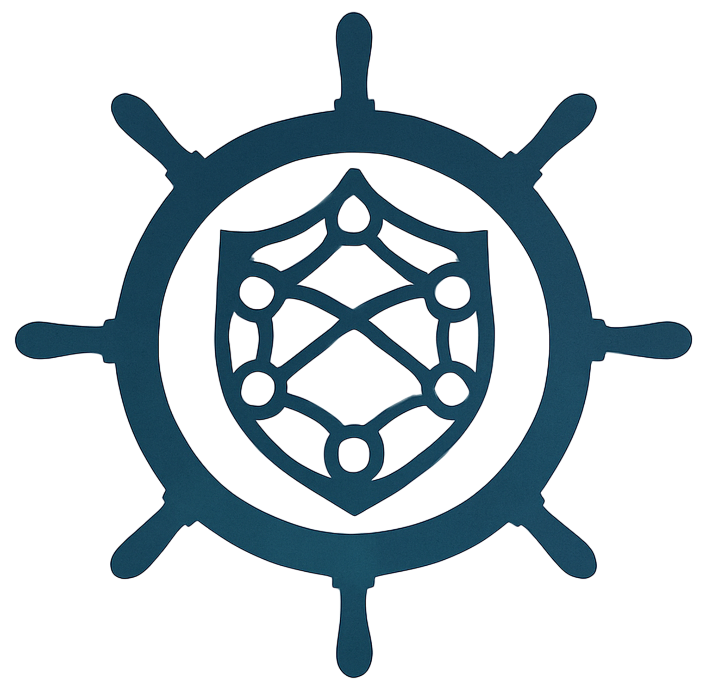}}\hspace{0.5ex}HELM: Hyperbolic Large Language Models \\ via Mixture-of-Curvature Experts}
\def\@fnsymbol#1{\ensuremath{\ifcase#1\or \dagger\or \ddagger\or
   \mathsection\or \mathparagraph\or \|\or **\or \dagger\dagger
   \or \ddagger\ddagger \else\@ctrerr\fi}}
\author{%
    \textbf{Neil He\textcolor{custom_green}{$^\star$} \quad Rishabh Anand\textcolor{custom_green}{$^\star$} \quad Hiren Madhu \quad Ali Maatouk\vspace{5pt}}\\
    \textbf{Smita Krishnaswamy\quad Leandros Tassiulas \quad Menglin Yang\thanks{Correspondence Author; work done while at Yale University, USA.} \quad Rex Ying \vspace{5pt}}\\
    Yale University, USA \quad \textcolor{custom_green}{$^\star$}Equal Contribution \vspace{5pt} \\ Open-source code: \href{https://github.com/Graph-and-Geometric-Learning/helm}{\textcolor{custom_green}{\texttt{github.com/Graph-and-Geometric-Learning/helm}}}
}
\begin{document}

\maketitle
\begin{abstract}
Frontier large language models (LLMs) have shown great success in text modeling and generation tasks across domains. However, natural language exhibits inherent semantic hierarchies and nuanced geometric structure, which current LLMs do not capture completely owing to their reliance on Euclidean operations such as dot-products and norms. Furthermore, recent studies have shown that not respecting the underlying geometry of token embeddings leads to training instabilities and degradation of generative capabilities. These findings suggest that shifting to non-Euclidean geometries can better align language models with the underlying geometry of text. We thus propose to operate fully in \textit{Hyperbolic space}, known for its expansive, scale-free, and low-distortion properties. To this end, we introduce \textsc{HELM}, a family of \textbf{H}yp\textbf{E}rbolic Large \textbf{L}anguage \textbf{M}odels, offering a geometric rethinking of the Transformer-based LLM that addresses the representational inflexibility, missing set of necessary operations, and poor scalability of existing hyperbolic LMs. We additionally introduce a \textbf{Mi}xture-of-\textbf{C}urvature \textbf{E}xperts model, \textsc{HELM-MiCE}, where each expert operates in a distinct curvature space to encode more fine-grained geometric structure from text, as well as a dense model, \textsc{HELM-D}. For \textsc{HELM-MiCE}, we further develop hyperbolic Multi-Head Latent Attention (\textsc{HMLA}) for efficient, reduced-KV-cache training and inference. For both models, we further develop essential hyperbolic equivalents of rotary positional encodings and root mean square normalization. We are the first to train fully hyperbolic LLMs at billion-parameter scale, and evaluate them on well-known benchmarks such as MMLU and ARC, spanning STEM problem-solving, general knowledge, and commonsense reasoning. Our results show consistent gains from our \textsc{HELM} architectures – up to 4\% – over popular Euclidean architectures used in LLaMA and DeepSeek with superior semantic hierarchy modeling capabilities, highlighting the efficacy and enhanced reasoning afforded by hyperbolic geometry in large-scale language model pretraining.
\end{abstract}

\section{Introduction}\label{Sec:intro}

Contemporary Large Language Models (LLMs)~\cite{dubey2024llama3, gemma2024, dai2024deepseekmoe, achiam2023gpt} fundamentally operate within Euclidean space. This manifests in their reliance on Euclidean operations such as dot products and norms applied to token embeddings. However, this architecture presents a potential mismatch with the intrinsic structure of natural language data. Existing works have shown that textual data, particularly token inputs to LLMs, exhibit an inherent semantic hierarchy~\cite{yang2024hypformer, yang2024hyplora, he2025position, nickel2018learning}, thus requiring a space that can naturally accommodate these relationships. An ideal LLM architecture would possess geometric alignment with the underlying structure of the data it aims to represent.  

\begin{wrapfigure}{r}{5.7cm}
    \captionsetup{font=scriptsize}
    \includegraphics[width=\linewidth]{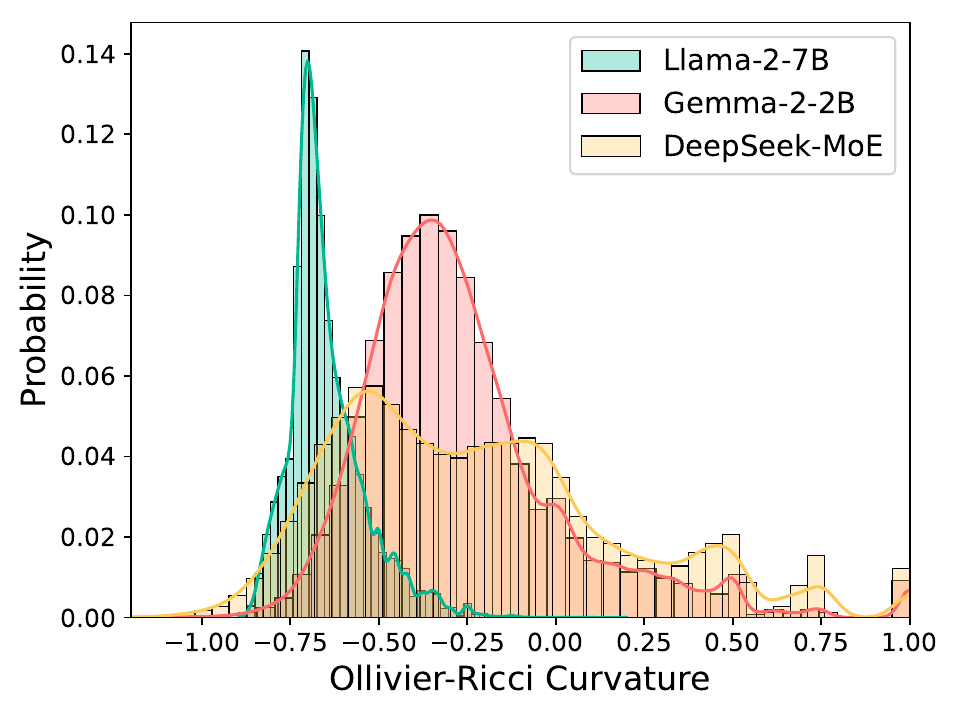}
    \caption{Ricci curvature distribution of token embeddings from decoder-only LLMs showing substantial variation of negative curvature, implying higher local hyperbolicity.}\label{fig:intro-fig}
\end{wrapfigure}

To further illustrate the unique geometry of text data, we show in Figure \ref{fig:intro-fig} the distribution of \textit{Ricci Curvature} of token embeddings from popular decoder-only LLMs on 1000 diverse samples from RedPajama \citep{weber2024redpajamaopendatasettraining}\footnote{Ricci Curvature is a metric of how strongly or loosely connected a region of space is. The more negative the curvature, the more hyperbolic the space is. We describe the metric in Appendix \ref{app:curvature-defn}.}. We observe that the vast majority of tokens exhibit a wide range of negative curvatures. This has also been observed by \citet{robinson2024structuretokenspacelarge}, who investigate the subspace of token embeddings and its inherent non-Euclidean structure. As Ricci curvature measures the local geometry of a manifold, these empirical observations suggest hierarchical token structures, while the variation in curvature values suggest complex token geometry that cannot be captured by single curvature approaches. \citet{robinson2024structuretokenspacelarge} also show that not respecting the geometry of tokens will harm a Transformer-based LLM's generative capabilities while introducing undue training instabilities. \citet{he2025position} also document power law distributions of token frequencies, implying hierarchy among tokens. We thus propose a geometric rethinking of the transformer-based LLM: \textit{operate fully in hyperbolic space}, where the negative curvature results in exponentially increasing volume w.r.t. distance. Hyperbolic space is expansive, scale-free, and yields low distortion, and has shown success in numerous settings, particularly in Transformer architectures \cite{yang2024hypformer, chen2021fully, gulcehre2019hyperbolicAT,chen2024hyperbolic, HNN++}. It provides token embeddings more “breathing room” in the network, better aligning with the underlying structure of the text data.

However, previous works that study hyperbolic Transformers and pre-trained models have several major shortcomings: (1) \textbf{Inflexible geometric spaces.} They assign each Transformer block a single hyperbolic manifold, embedding entire token sequences in fixed-curvature spaces. This approach does not align with the observed substantial variant in curvature values as noted above, thus limiting the expressiveness of hidden representations. (2) \textbf{Lack of essential operations.} They omit widely-used LLM components such as rotary positional encoding and RMS normalization, and lack theoretical guarantees of LLM modules in Euclidean settings; (3) \textbf{Poor scalability.} They focus on low-dimensional settings and use quadratic hyperbolic self-attention mechanisms that do not scale comparably to modern Euclidean foundation models \cite{dubey2024llama3, deepseekai2024deepseekv3technicalreport}. In this work, we address the limitations of both Euclidean LLMs and prior hyperbolic Transformers through the following contributions:
\begin{itemize}[itemsep=0pt, leftmargin=15pt]
    \item To alleviate limitation (1), we introduce hyperbolic \textbf{Mi}xture-of-\textbf{C}urvature \textbf{E}xperts (\textsc{MiCE}), where each expert operates in a distinct curvature space, enabling the model to encode fine-grained geometric structure from text. The mixed-curvature strategy employed by \textsc{MiCE} captures the range of negative curvatures prevalent among token embeddings, mitigating previous hyperbolic Transformers’ representational inflexibility.
    \item To resolve limitation (2), we introduce several novel hyperbolic modules to develop hyperbolic LLMs: Hyperbolic Rotary Positional Encodings (\textsc{HoPE}) and Hyperbolic RMSNorm, bridging the gap between hyperbolic Transformers and modern Euclidean LLMs. Additionally, we provide extensive theoretical analysis that provides similar guarantees as in the Euclidean case.
    \item To address limitation (3), we propose Hyperbolic Multi-head Latent Attention (\textsc{HMLA}) to perform efficient inference with a reduced-footprint KV cache and reduce active memory during training, thereby bridging the scalability gap between previous hyperbolic Transformers and current Euclidean LLMs.
    \item Finally, we introduce \textsc{HELM}, the first attempt at building a family of fully \textbf{H}yp\textbf{E}rbolic Large \textbf{L}anguage \textbf{M}odels. We are the first to train at billion parameter scale and outperform popular Euclidean architectures on a diverse set of benchmarks. 
\end{itemize}

\section{Related Work}\label{sec:related}

\textbf{Hyperbolic Transformers.}~
Hyperbolic geometry has emerged as a powerful embedding space for representational learning, particularly in domains characterized by hierarchical and scale-free structures \cite{ganea2018hyperbolic, nickel2017poincare, HNN}. Hyperbolic Neural Networks (HNN) \cite{HNN} and their extension HNN++ \cite{HNN++} have shown that operating in hyperbolic spaces increases representational capacity. Several works have incorporated hyperbolic geometry into Transformers to better capture semantic relationships. HAT \cite{gulcehre2019hyperbolicAT}, HNN++ \cite{HNN++}, and HyboNet \cite{chen2021fully} all propose equivalent formulations of hyperbolic attention in different \textit{models} of hyperbolic space. HypFormer \cite{yang2024hypformer} developed several essential modules that were lacking in previous works, showing improved performance for structured and hierarchical datasets. Although work on hyperbolic pre-trained language models exist \cite{chen2024hyperbolic}, they ignore essential components required to train \textit{large} language models, such as normalization layers, residual connections, roatry positional encodings. Furthermore, these works suffer from the limitations mentioned in \cref{Sec:intro}. Previous work have also considered mixed-curvature Transformers \cite{cho2023curve}, however, they only using different curvature values in each attention head and relying on tangent space methods that are prone to mapping errors and numerical stability \cite{chen2021fully, yang2024hypformer, he2016deep, Bdeir2024fully}. Further related works include LResNet \cite{he2025lresnet} who introduces an efficient hyperbolic residual connection method that mitigated numerical instability and tangent space mappings. Some works have devised hybrid approaches by incorporating hyperbolic components to existing pre-trained Euclidean LLMs and vision models~\cite{desai2023hyperbolic, pal2025compositional, yang2024hyplora, Ge2022HyperbolicCL, yue2023hyperbolic, mandica2024hyperbolic}. However, these works do not develop and pre-train hyperbolic LLMs from scratch. Our work advances this line of research by developing and scaling hyperbolic architectures to large-scale pretraining setups, and additionally introducing novel components for efficient and expressive language modeling in hyperbolic space.

\textbf{Open Large Language Models.}~
The recent surge in open LLMs has democratized access to cutting-edge NLP capabilities, enabling broader research and application development. Notable among these is LLaMA \cite{touvron2023llama}, which introduced a family of efficient and powerful models trained on diverse, large-scale corpora. Llama models employ several optimizations such as rotary positional embeddings and grouped query attention, making them competitive across various downstream tasks. Building on these ideas, Gemma \cite{gemma2024} introduced further improvements for better data curation, advanced pertaining techniques, and careful model scaling strategies. In parallel, Mixture-of-Experts (MoE) architectures have emerged as a prominent design to enhance model capacity without a proportional increase in computation cost. DeepSeek-MoE \cite{deepseekv2} introduces an efficient routing mechanism to dynamically activate a subset of experts per input, significantly improving inference throughput compared to other MoE models such as Mixtral \cite{jiang2024mixtral}. However, all these models are inherently Euclidean, and while effective, may not align well with the geometry of the token subspace.

\vspace{-0.25cm}
\section{Preliminary}\label{Sec:preliminary}

We introduce the relevant background required to understand the building blocks of \textsc{HELM}, including hyperbolic geometry, hyperbolic self-attention, and other useful hyperbolic modules.
\vspace{-0.3cm}
\subsection{Lorentz Hyperbolic Geometry}\label{Sec:hyp_geo}
\vspace{-0.25cm}
There are several isometric models of hyperbolic space employed in previous research \cite{van2023poincar, gulcehre2019hyperbolicAT, tifrea2018poincar, nickel2017poincare, yang2024hypformer, chen2021fully, Bdeir2024fully}. In this study, we choose the \textit{Lorentz model}: its special space-time interactions allow for \textit{fully hyperbolic operations}, offering enhanced expressiveness and empirical stability from an optimization perspective \cite{chen2021fully, yang2024hypformer, mishne2024numericalstabilityhyperbolicrepresentation}. Nevertheless, our methods can be easily reformulated for other models of hyperbolic geometry via isometric mappings. 

\textbf{Lorentz model.} An $n$-dimensional Lorentz model, denoted as $\mathbb{L}^{K,n}$, is a Riemannian manifold $\mathcal{L}^n$ equipped with the Riemannian metric tensor $\mathfrak{g}_n^K = \mathrm{diag}(-1, 1, \ldots, 1)$ and defined by a constant negative curvature $K<0$. Each point $\x\in\mathbb{L}^{K,n}$ has a parametrized form $[x_t, \x_s]^T$, where $x_t\in\R$ is the \textit{time-like} dimension and $\x_s\in\R^{n}$ is the \textit{space-like} dimension. For points $\x,\y\in\mathbb{L}^{K,n}$, their \textit{Lorentzian inner product} $\langle\x,\y\rangle_\mathcal{L}$ is given by 
$\langle\x,\y\rangle_{\mathcal{L}} = -x_ty_t + \x_s^T\y_s = \x^T\mathfrak{g}_n^K\y.$
Hence, the \textit{Lorentzian norm} $|\|\x\||_\mathcal{L}\coloneq\sqrt{|\langle \x, \x\rangle_\mathcal{L}|}$. Formally, the point set $\mathcal{L}^n \coloneq \{\x\in\R^{n+1}: \langle\x,\x\rangle_\mathcal{L} = 1/K, x_t>0\}$.
The origin $\mathbf{o}\in\mathbb{L}^{K,n}$ is the point $[\sqrt{-1/K}, 0,\ldots,0]^T$. 

\textbf{Tangent space.}
The tangent space at a point $\x\in\mathbb{L}^{K,n}$ is set of points orthogonal to $\x$, defined as $
    \mathcal{T}_\mathbf{x}\mathbb{L}^{K,n} = \{\y\in\R^{n+1}: \langle\x,\y\rangle_{\mathcal{L}} =0 \}.$
Notably, the tangent space is isomorphic to Euclidean space.
\vspace{-0.25cm}
\subsection{Hyperbolic Neural Network Modules} 
\vspace{-0.25cm}
Extensive work has been done to develop hyperbolic neural network modules \cite{yang2024hypformer, chen2021fully, Bdeir2024fully, van2023poincar, he2025lresnet, HNN, HNN++, gulcehre2019hyperbolicAT}, which we introduce below. 

\textbf{Lorentz Linear Layer.} Given curvatures $K_1,K_2$, and parameters $\mathbf{W}\in\R^{(n+1) \times m}$ and $\mathbf{b}\in\R^{m}$, the \textit{Lorentzian linear transformation with curvature} \cite{yang2024hypformer} is the map $\mathrm{HLT}:\mathbb{L}^{K_1,n}\to\mathbb{L}^{K_2,m}$ given by, \begin{equation}
    \mathrm{HLT}(\x;\mathbf{{W}}, \mathbf{b}) = \sqrt{\frac{K_2}{K_1}}\cdot\left[\sqrt{\|\mathbf{W}^\top\x+\mathbf{b}\|^2-1/K_2}, \mathbf{W}^\top\x+\mathbf{b}\right].
\end{equation}

\textbf{Lorentz Residual Connection.} Let $\x,f(\x)\in\mathbb{L}^{K,n}$ where $\x$ is an input vector and $f(\x)$ is the output of a neural network $f$. Then, the \textit{Lorentzian residual connection} \cite{he2025lresnet} is given by, 
\begin{equation}\label{Eq:res}
    \x\oplus_\mathcal{L}f(\x) = \alpha_1\x+\alpha_2\y, \alpha_i=w_i/\left(\sqrt{-K}|\|w_1\x+w_2f(\x)\|_\mathcal{L}|\right) \text{ for } i\in\{0,1\},
\end{equation}
where $\alpha_1,\alpha_2$ are weights parametrized by constants $(w_1,w_2)\in\R^2\setminus\{(0,0)\}$. 

\textbf{Hyperbolic self-attention.}
Hyperbolic self-attention is defined equivalently in different models through \textit{manifold midpoints} \cite{HNN++, gulcehre2019hyperbolicAT,chen2021fully}. Given $T$ tokens $\mathbf{X}$ where $\mathbf{x}_i\in {\mathbb{L}^{K,n}}$, $\mathbf{W^Q}, \mathbf{W^K}, \mathbf{W^V}\in\R^{(n+1) \times d}$, then the \textit{Lorentzian self-attention} \cite{chen2021fully, yang2024hypformer} is given by \begin{align}\label{Eq:hyp_att}
    \mathrm{Att}_\mathcal{L}(\mathbf{x})_i = \frac{\sum_{j=1}^T\nu_{i,j}\mathbf{v}_j}{\sqrt{-K}|\|\sum_{l=1}^T\nu_{i,k}\mathbf{v}_l\||_\mathcal{L}},&&\text{where }\nu_{i,j} = \frac{\exp\left(-d_\mathcal{L}^2(\mathbf{q}_i,\mathbf{k}_j)/\sqrt{m}\right)}{\sum_{l=1}^T\exp\left(-d^2_\mathcal{L}(\mathbf{q}_i,\mathbf{k}_l)/\sqrt{m}\right)}
\end{align}
where $d_\mathcal{L}$ is the Lorentzian distance and $\mathbf{Q} = \mathrm{HLT}(\mathbf{X}; \mathbf{W^Q}, \mathbf{b^Q}), \mathbf{K} = \mathrm{HLT}(\mathbf{X}; \mathbf{W^K}, \mathbf{b^K}), \mathbf{V} = \mathrm{HLT}(\mathbf{X}; \mathbf{W^V}, \mathbf{b^V})$ are the queries, keys, and values respectively.

\section{Method}\label{Sec:method}
In this section, we propose several novel hyperbolic modules that serve as building blocks for \textsc{HELM}. We then introduce the overall architecture of \textsc{HELM}: a Mixture-of-Curvature variant, \textsc{HELM-MiCE}, and a dense variant, \textsc{HELM-D}, for comparison.

\subsection{Hyperbolic Rotary Positional Encoding}\label{sec:hope-defn}
Previous works that proposed positional encoding methods in hyperbolic space \cite{chen2021fully, yang2024hypformer, he2025hypercore} are confined to only learning relative encodings. However, contemporary LLMs \cite{dubey2024llama3, deepseekai2024deepseekv3technicalreport, gpt-neox-20b} have instead shifted towards \textit{Rotary Positional Encodings} (RoPE) \cite{su2021roformer}, a scalable alternative that incorporates aspects from both absolute and relative encoding methods, improving length generalization and downstream performance. We thus propose a novel \textit{hyperbolic rotary positional encoding} (\textsc{HoPE}) to construct positional encodings in hyperbolic space, and prove the same theoretical guarantees as RoPE. Given $T$ tokens $\mathbf{X}$, where $\mathbf{x}_i\in \mathbb{L}^{K,d}$ ($d$ even), let $\mathbf{Q}, \mathbf{K}$ be the queries and keys as in \cref{Eq:hyp_att}. The hyperbolic rotary positional encoding applied to the $i$-th token is, \begin{equation}\label{Eq:hope}
    \mathrm{HoPE}({\mathbf{z}_i}) = \begin{bmatrix}
        \sqrt{\|\mathbf{R}_{i,\Theta}({\mathbf{z}_i})_s\|^2-1/K}\\  \mathbf{R}_{i,\Theta}({\mathbf{z}_i})_s
    \end{bmatrix}; \mathbf{R}_{i, \Theta}=\begin{pmatrix}
       \mathbf{R}_{i,\theta_1} & 0 & 0 & \ldots & 0\\
        0 & \mathbf R_{i,\theta_2} & 0 & \ldots & 0\\
        \vdots & \vdots & \ddots & \ldots & 0\\
        0 & 0 &\ldots &\ldots & \mathbf R_{i, \theta_{d/2}}
    \end{pmatrix},
\end{equation}
where $\mathbf{R}_{i,\theta_l}$ is the 2D rotation matrix parameterized by angle $i\theta_l$ and $\mathbf{z}$ can be a query $\mathbf{q}_i$ or key $\mathbf{k}_j$.

Next, we study the theoretical aspects of \textsc{HoPE}; all proofs can be found in \cref{appendix:proofs}. First note that since $\mathbf{R}_{i,\Theta}$ is an Euclidean rotation matrix, it isometrically preserves the (Euclidean) norm of vectors. Given the definition of the Lorentz model (\cref{Sec:hyp_geo}), an equivalent expression for \cref{Eq:hope} is $\mathrm{HoPE}(\mathbf{z}_i) = \begin{pmatrix}
    1 & 0\\
    0 & \mathbf{R}_{i,\Theta}
\end{pmatrix}\mathbf{z}_i$, making $\mathrm{HoPE}$ a valid Lorentzian rotation operation (see, e.g., \cite{chen2021fully}). 

\textbf{Validity.} 
A defining characteristic for the Euclidean RoPE is that the inner product of the encoded keys and queries is a function of only the word embeddings and their relative position. Thus, only the relative positional information is encoded \cite{su2021roformer}. For hyperbolic attention in \cref{Eq:hyp_att}, the analogous is defined with $-d^2_\mathcal{L}$ instead, which we formalize below. 
\begin{proposition}\label{prop:relative}
    Let $\mathbf{X}$ be $T$ tokens with $\mathbf{x}_i\in\mathbb{L}^{K,d}$. Let $\mathbf{Q}, \mathbf{K}$ be queries and keys as in \cref{Eq:hyp_att}. Then $-d^2_\mathcal{L}\left(\mathrm{\mathrm{HoPE}\left(\mathbf{q}_a
    \right), \mathrm{HoPE}\left(\mathbf{k}_b\right))}\right) = g(\mathbf{x}_a, \mathbf{x}_b; a-b)$ for some function $g$. 
\end{proposition}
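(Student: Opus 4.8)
The plan is to reduce the statement about the squared Lorentzian distance to a statement about the Lorentzian inner product, and then transplant the classical RoPE relative-position argument onto the space-like coordinates. First I would recall that the Lorentzian distance is an explicit \emph{monotone} function of the inner product, namely $d_\mathcal{L}(\mathbf{x},\mathbf{y}) = \tfrac{1}{\sqrt{-K}}\operatorname{arccosh}\!\left(K\langle\mathbf{x},\mathbf{y}\rangle_\mathcal{L}\right)$. Consequently $-d^2_\mathcal{L}\!\left(\mathrm{HoPE}(\mathbf{q}_a),\mathrm{HoPE}(\mathbf{k}_b)\right)$ is a fixed deterministic function of the single scalar $\langle\mathrm{HoPE}(\mathbf{q}_a),\mathrm{HoPE}(\mathbf{k}_b)\rangle_\mathcal{L}$. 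It therefore suffices to show that this inner product depends only on the word embeddings and on the relative index $a-b$.

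Next I would invoke the equivalent matrix form of $\mathrm{HoPE}$ recorded just before the statement: $\mathrm{HoPE}(\mathbf{z}_i)$ leaves the time-like coordinate untouched and applies the Euclidean rotation $\mathbf{R}_{i,\Theta}$ to the space-like block. The key observation is that $\mathbf{R}_{i,\Theta}$ is orthogonal, hence norm-preserving, so the reconstructed time coordinate $\sqrt{\|\mathbf{R}_{i,\Theta}(\mathbf{z}_i)_s\|^2 - 1/K}$ equals the original time coordinate $z_{i,t}$ and carries no positional information. Writing $\mathbf{q}_a = [q_{a,t},\mathbf{q}_{a,s}]^\top$ and $\mathbf{k}_b = [k_{b,t},\mathbf{k}_{b,s}]^\top$, expanding $\langle\cdot,\cdot\rangle_\mathcal{L}$ splits it into a time part $-q_{a,t}k_{b,t}$, which is manifestly position-independent, and a space part $(\mathbf{R}_{a,\Theta}\mathbf{q}_{a,s})^\top(\mathbf{R}_{b,\Theta}\mathbf{k}_{b,s})$.

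The crux is the space part, handled exactly as in Euclidean RoPE. I would use the block-diagonal structure of $\mathbf{R}_{i,\Theta}$ together with the composition law for $2$D rotations, $\mathbf{R}_{a,\theta_l}^\top\mathbf{R}_{b,\theta_l} = \mathbf{R}_{(b-a),\theta_l}$, to obtain $\mathbf{R}_{a,\Theta}^\top\mathbf{R}_{b,\Theta} = \mathbf{R}_{(b-a),\Theta}$, and hence $(\mathbf{R}_{a,\Theta}\mathbf{q}_{a,s})^\top(\mathbf{R}_{b,\Theta}\mathbf{k}_{b,s}) = \mathbf{q}_{a,s}^\top\mathbf{R}_{(b-a),\Theta}\mathbf{k}_{b,s}$, which depends only on the space-like parts of $\mathbf{q}_a,\mathbf{k}_b$ and on $b-a$ (equivalently $a-b$). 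Combining with the time part, the full inner product is a function of $\mathbf{q}_a,\mathbf{k}_b$ and $a-b$; since $\mathbf{q}_a = \mathrm{HLT}(\mathbf{x}_a;\mathbf{W^Q},\mathbf{b^Q})$ and $\mathbf{k}_b = \mathrm{HLT}(\mathbf{x}_b;\mathbf{W^K},\mathbf{b^K})$ are deterministic functions of the respective word embeddings, this yields the claimed $g(\mathbf{x}_a,\mathbf{x}_b;a-b)$.

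I do not anticipate a genuine obstacle: the argument is the standard RoPE relative-position identity restricted to the space-like block of the Lorentz model. The only step requiring care is the bookkeeping that the time-like coordinate is truly position-independent, which rests entirely on the orthogonality (norm preservation) of $\mathbf{R}_{i,\Theta}$; verifying this cleanly is what makes the time part drop out and lets the monotone reduction from $-d_\mathcal{L}^2$ to $\langle\cdot,\cdot\rangle_\mathcal{L}$ go through.
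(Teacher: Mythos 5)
Your proof is correct, and its skeleton matches the paper's: both reduce $-d^2_\mathcal{L}$ to the Lorentzian inner product, split that inner product into a time-like and a space-like part, observe that the time-like part carries no positional information because $\mathbf{R}_{i,\Theta}$ is norm-preserving (so HoPE fixes the time coordinate), and then show the space-like part depends only on the embeddings and $a-b$. Where you genuinely diverge is the treatment of the space-like part. The paper reduces to $d=2$, passes to polar/complex-exponential coordinates following the original RoPE derivation of \citet{su2021roformer}, and argues separately that the radial components $\varphi_q,\varphi_k$ are position-independent while the angular components subtract to $(a-b)\sigma$ plus an embedding-dependent phase. You instead invoke the matrix identity $\mathbf{R}_{a,\Theta}^\top\mathbf{R}_{b,\Theta} = \mathbf{R}_{(b-a),\Theta}$, which handles all $2$D blocks at once, needs no reduction to $d=2$ and no polar bookkeeping, and is the more economical route to the mere \emph{existence} of $g$. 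What the paper's longer derivation buys is an explicit polar form of $g$ (product of radii, difference of angles) whose ingredients are reused almost verbatim in the proof of the decay property (\cref{prop:decay}); your argument leaves the structure of $g$ implicit. One minor caveat: you quote the geodesic distance $d_\mathcal{L}(\mathbf{x},\mathbf{y}) = \tfrac{1}{\sqrt{-K}}\operatorname{arccosh}\left(K\langle\mathbf{x},\mathbf{y}\rangle_\mathcal{L}\right)$, whereas the quantity the paper actually manipulates is the squared Lorentzian norm of the difference, $d^2_\mathcal{L}(\mathbf{x},\mathbf{y}) = \tfrac{2}{K} - 2\langle\mathbf{x},\mathbf{y}\rangle_\mathcal{L}$; since both conventions are deterministic functions of the single scalar $\langle\mathbf{x},\mathbf{y}\rangle_\mathcal{L}$, your reduction step — and hence the proof — goes through unchanged under either.
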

\textsc{HoPE} only encodes relative positional information based on \cref{prop:relative} similar to RoPE, which establishes its \textit{validity} as a RoPE operation. 

\textbf{Long-term decay.} 
A desiring property of RoPE is long-term decay, where the attention score between a key-query pair decays when the relative position increases. \textsc{HoPE} has the same property as well, as shown by the following proposition. 
\begin{proposition}\label{prop:decay}
    Let $\mathbf{Q}, \mathbf{K}$ be as defined in \cref{Eq:hyp_att}, then the negative square Lorentz distance $-d_\mathcal{L}\left(\mathrm{HoPE}\left(\mathbf{q}_i\right), \mathrm{HoPE}(\mathbf{k}_j)\right)$ can be upper bounded by $f(\mathbf{q}_i, \mathbf{k}_j)g(i-j)<0$, where $f$ has no dependencies on position, and $g$ depends entirely on relative position and scales inversely w.r.t. $i-j$.
\end{proposition}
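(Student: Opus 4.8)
The plan is to reduce the statement to the Euclidean RoPE long-term-decay estimate applied to the space-like part of the Lorentzian inner product, and then transport that estimate through the Lorentzian distance formula. First I would invoke the observation preceding \cref{prop:relative} that $\mathrm{HoPE}$ acts as the Lorentz rotation $\mathrm{diag}(1,\mathbf{R}_{i,\Theta})$, so that it fixes the time-like coordinate and applies the orthogonal matrix $\mathbf{R}_{i,\Theta}$ to the space-like coordinate. Expanding $\langle \mathrm{HoPE}(\mathbf{q}_i),\mathrm{HoPE}(\mathbf{k}_j)\rangle_\mathcal{L}$ then produces a position-independent time-like term $-(q_i)_t (k_j)_t$ plus a space-like term $(\mathbf{q}_i)_s^\top \mathbf{R}_{i,\Theta}^\top \mathbf{R}_{j,\Theta}(\mathbf{k}_j)_s = (\mathbf{q}_i)_s^\top \mathbf{R}_{(j-i),\Theta}(\mathbf{k}_j)_s$, using $\mathbf{R}_{i,\Theta}^\top\mathbf{R}_{j,\Theta}=\mathbf{R}_{(j-i),\Theta}$. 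Since $d_\mathcal{L}(\mathbf{u},\mathbf{w})=\tfrac{1}{\sqrt{-K}}\,\mathrm{arccosh}(K\langle\mathbf{u},\mathbf{w}\rangle_\mathcal{L})$ is a monotone function of the Lorentzian inner product, every dependence of $-d_\mathcal{L}^2$ on the positions $i,j$ is funneled through this single rotary term, which is the only quantity whose relative-position behaviour remains to be controlled.

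Next I would bound the rotary term exactly as in RoFormer. Pairing the $d$ space-like coordinates into $d/2$ complex numbers $\tilde q_l,\tilde k_l$ and setting $h_l=\tilde q_l\overline{\tilde k_l}$, the rotary term equals $\mathrm{Re}\sum_{l=1}^{d/2} h_l\, e^{\mathrm{i}(i-j)\theta_l}$ (the angle being proportional to $i-j$ up to sign). Abel summation with partial sums $S_n=\sum_{l=1}^{n}e^{\mathrm{i}(i-j)\theta_l}$ yields $\big|\sum_l h_l e^{\mathrm{i}(i-j)\theta_l}\big|\le \big(\max_l|h_{l+1}-h_l|\big)\sum_{l=1}^{d/2}|S_l|$. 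This is where the required factorization emerges: the factor $\max_l|h_{l+1}-h_l|$ depends only on the content vectors $\mathbf{q}_i,\mathbf{k}_j$ and plays the role of $f$, while $g(i-j)=\tfrac{2}{d}\sum_l|S_l|$ depends only on the relative position and, by the standard estimate, decays as $|i-j|$ grows. Because the space-like coordinates live in a Euclidean subspace and $\mathbf{R}$ is an ordinary rotation, this part of the argument transfers verbatim from the Euclidean proof.

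Finally I would transport this bound through the distance formula to obtain the stated product form. Using the manifold constraint $K\langle\mathrm{HoPE}(\mathbf{q}_i),\mathrm{HoPE}(\mathbf{k}_j)\rangle_\mathcal{L}\ge 1$ (which keeps $d_\mathcal{L}$ real, while distinctness of the tokens keeps us away from the singular value $1$), I would expand $-d_\mathcal{L}^2$ about the position-independent baseline fixed by the time-like term and bound the positional fluctuation by the local Lipschitz constant of $\mathrm{arccosh}^2$ times the decaying rotary bound from the previous step. Collecting the curvature constant $1/K<0$ together with the content factor then gives an upper bound of the form $f(\mathbf{q}_i,\mathbf{k}_j)\,g(i-j)<0$ with $g$ decaying in $|i-j|$.

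I expect the nonlinearity of $\mathrm{arccosh}$ to be the main obstacle: the Euclidean RoPE proof controls the attention logit, which is linear in the inner product, whereas here the logit $-d_\mathcal{L}^2$ is a nonlinear function of it, so I must carry the decay through $\mathrm{arccosh}^2$ while preserving the clean separation into a content factor and a relative-position factor, and track signs carefully so the bound is genuinely negative. The delicate point is the behaviour near coincident tokens, where $\mathrm{arccosh}$ has infinite slope; for distinct $\mathbf{q}_i,\mathbf{k}_j$ the inner product stays bounded away from $1/K$, so the local Lipschitz argument remains valid. If instead the $d_\mathcal{L}^2$ appearing in \cref{Eq:hyp_att} denotes the squared Lorentzian distance $2/K-2\langle\cdot,\cdot\rangle_\mathcal{L}$, then $-d_\mathcal{L}^2$ is affine in the inner product and this obstacle disappears entirely, with the RoFormer-style bound on the rotary term yielding the claim directly.
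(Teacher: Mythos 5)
Your proposal is correct and follows essentially the same route as the paper's proof: expand the Lorentzian inner product under the Lorentz rotation $\mathrm{diag}(1,\mathbf{R}_{i,\Theta})$, isolate the position-independent time-like term, and apply Abel summation to the rotary space-like term exactly as in RoFormer, obtaining a content-only factor $\max_l|h_{l+1}-h_l|$ times a positionally decaying factor $\sum_l|S_l|$. Your closing hedge is also the correct resolution: the paper's $d^2_\mathcal{L}$ is the squared Lorentzian distance $2/K-2\langle\cdot,\cdot\rangle_\mathcal{L}$ (affine in the inner product), so $-d^2_\mathcal{L} = -2/K - 2q_tk_t + 2\mathbf{q}_s^\top\mathbf{k}_s$ and the $\mathrm{arccosh}$ difficulty you anticipated never arises.
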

Thus, \textsc{HoPE} ensures far-apart tokens have weaker connections based on \cref{prop:decay}, a property \textbf{not guaranteed} by previous learned encoding methods. 

\textbf{Robustness to arbitrary token distances.}
\citet{barbero2025rope} recently show that decaying token dependency (Proposition \ref{prop:decay}) may not be the primary reason for RoPE's success, but rather it enables LLMs to attend to specific relative distances. Our formulation of \textsc{HoPE} also ensures this property:
\begin{proposition}\label{prop:maximal-softmax}
    Let $\mathbf{Q}, \mathbf{K}$ be as defined in \cref{Eq:hyp_att}, then \textsc{HoPE} can be maximal at an arbitrary distance, i.e., for any relative distance $r \in \mathbb{Z}$, there exists a key $\mathbf{k}_j$ such that the softmax value is maximum at distance $r$.
\end{proposition}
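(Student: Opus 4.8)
The plan is to reduce the claim to the purely Euclidean RoPE analysis of \citet{barbero2025rope} acting on the space-like coordinates, by first peeling away the softmax and the Lorentzian distance through a chain of monotone maps. First I would observe that the weight $\nu_{i,j}$ in \cref{Eq:hyp_att} is a strictly increasing function of $-d_\mathcal{L}^2(\mathrm{HoPE}(\mathbf{q}_i),\mathrm{HoPE}(\mathbf{k}_j))$, since both $\exp(\cdot/\sqrt{m})$ and the softmax normalization are monotone; moreover, using $d_\mathcal{L}(\x,\y)=\tfrac{1}{\sqrt{-K}}\operatorname{arccosh}(K\langle\x,\y\rangle_\mathcal{L})$ with $\operatorname{arccosh}$ increasing on $[1,\infty)$ and $K<0$, minimizing $d_\mathcal{L}$ is equivalent to maximizing $\langle\x,\y\rangle_\mathcal{L}$. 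Hence the relative distance at which $\nu_{i,j}$ peaks is exactly the one that maximizes $\langle\mathrm{HoPE}(\mathbf{q}_i),\mathrm{HoPE}(\mathbf{k}_j)\rangle_\mathcal{L}$.

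Next I would compute this inner product explicitly. Because $\mathbf{R}_{i,\Theta}$ is an Euclidean rotation it preserves the norm of the space-like part, so the time-like coordinate is left invariant by $\mathrm{HoPE}$ (cf.\ \cref{Eq:hope}). Writing $\mathbf{q}_i=[q_t,\mathbf{q}_s]^\top$ and $\mathbf{k}_j=[k_t,\mathbf{k}_s]^\top$, this yields
\begin{equation*}
\langle\mathrm{HoPE}(\mathbf{q}_i),\mathrm{HoPE}(\mathbf{k}_j)\rangle_\mathcal{L}=-q_tk_t+\mathbf{q}_s^\top\mathbf{R}_{i,\Theta}^\top\mathbf{R}_{j,\Theta}\mathbf{k}_s=-q_tk_t+\mathbf{q}_s^\top\mathbf{R}_{(j-i),\Theta}\mathbf{k}_s,
\end{equation*}
using $\mathbf{R}_{i,\Theta}^\top\mathbf{R}_{j,\Theta}=\mathbf{R}_{(j-i),\Theta}$. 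The first term is independent of position, recovering \cref{prop:relative}, so maximizing over the relative distance $\delta=j-i$ reduces to maximizing the purely Euclidean quantity $\mathbf{q}_s^\top\mathbf{R}_{\delta,\Theta}\mathbf{k}_s$.

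I would then decompose this into its $2\times2$ rotation blocks, $\mathbf{q}_s^\top\mathbf{R}_{\delta,\Theta}\mathbf{k}_s=\sum_{l=1}^{d/2}(\mathbf{q}_s^{(l)})^\top\mathbf{R}_{i,\theta_l}|_{i=\delta}\,\mathbf{k}_s^{(l)}$, and use the identity $(\mathbf{q}^{(l)})^\top\mathbf{R}_{\delta\theta_l}\mathbf{k}^{(l)}=\|\mathbf{q}^{(l)}\|\,\|\mathbf{k}^{(l)}\|\cos(\delta\theta_l+\varphi_l)$, where the phase $\varphi_l$ is the angular offset between the block subvectors and can be set to any value by choosing the direction of $\mathbf{k}^{(l)}$. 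To realize a target distance $r\in\mathbb{Z}$, I would activate a single frequency block $l$ in which the query is nonzero, zero out the remaining block components of $\mathbf{k}_s$, and pick the key direction so that $\varphi_l=-r\theta_l$. The resulting score is then $\propto\cos((\delta-r)\theta_l)$, which attains its global maximum $1$ exactly at $\delta=r$; by the monotone reduction above, the softmax weight is therefore maximal at relative distance $r$. This mirrors the Euclidean RoPE construction of \citet{barbero2025rope}, which carries over precisely because the time-like term factored out as a position-independent constant.

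The hard part is the first reduction rather than the construction: one must check carefully that passing from $\nu_{i,j}$ through $-d_\mathcal{L}^2$, then $d_\mathcal{L}$, and finally to the bare inner product $\langle\cdot,\cdot\rangle_\mathcal{L}$ preserves the \emph{location} of the maximizer, i.e.\ that every intervening map ($\exp$, normalization, $\operatorname{arccosh}$, and multiplication by $K<0$) is monotone in the correct direction and that the sign conventions on $K$ are handled consistently. Once this monotonicity bookkeeping and the time-like invariance of $\mathrm{HoPE}$ are in place, no genuinely hyperbolic difficulty remains and the argument becomes a direct translation of the Euclidean RoPE statement to the space-like subspace.
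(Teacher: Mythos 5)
Your proposal is correct and follows essentially the same route as the paper's proof: reduce through monotonicity to the Lorentzian inner product, note that \textsc{HoPE} fixes the time-like coordinate so only the space-like rotation blocks matter, and construct the key as a phase-shifted (pre-rotated) copy of the query so the blockwise cosine peaks at relative distance $r$ --- the paper does exactly this, just using all blocks at once ($\mathbf{K}=\mathrm{HoPE}$ applied to the query at offset $r$) rather than your single activated block. The one step to tighten is your claim that $\cos((\delta-r)\theta_l)$ is maximal \emph{exactly} at $\delta=r$: uniqueness over integer offsets is not automatic (e.g.\ it fails for $\theta_l$ a rational multiple of $\pi$) and requires Lemma A.1 of \citet{barbero2025rope} for the rational/algebraic RoPE frequencies, which the paper invokes explicitly and you leave implicit.
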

\textsc{HoPE} thus provides hyperbolic transformers the flexibility to decay token dependencies while also ensuring robust attention across arbitrary relative distances.

\textbf{Positional attention.}
\citet{barbero2025rope} also prove that using RoPE can help capture purely positional relationships via \textit{diagonal} attention patterns, where tokens only attend to themselves, and \textit{off-diagonal} attention patterns, where tokens attend only to their preceding neighbor. \textsc{HoPE} also allows for this:
\begin{proposition}\label{prop:positional-attn-pattern}
    Attention heads with \textsc{HoPE} can learn diagonal or off-diagonal attention patterns.
\end{proposition}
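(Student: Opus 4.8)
The plan is to reduce the claim to a statement about where, as a function of the relative position $r = j-i$, the post-\textsc{HoPE} Lorentzian inner product $\langle \mathrm{HoPE}(\mathbf{q}_i), \mathrm{HoPE}(\mathbf{k}_j)\rangle_{\mathcal{L}}$ attains its maximum, and then to exhibit query/key choices that place this maximum at $r=0$ (diagonal) or $r=-1$ (off-diagonal). First I would observe that the softmax weight $\nu_{i,j}$ in \cref{Eq:hyp_att} is monotonically increasing in $-d_\mathcal{L}^2(\mathbf{q}_i,\mathbf{k}_j)$, and that $-d_\mathcal{L}^2$ is itself strictly increasing in $\langle\mathbf{q}_i,\mathbf{k}_j\rangle_{\mathcal{L}}$, since $d_\mathcal{L}(\mathbf{x},\mathbf{y}) = \tfrac{1}{\sqrt{-K}}\mathrm{arccosh}(K\langle\mathbf{x},\mathbf{y}\rangle_{\mathcal{L}})$ with $\mathrm{arccosh}$ increasing and $K<0$. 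Hence, for a fixed query position $i$, the key maximizing the attention weight is exactly the one maximizing the Lorentzian inner product, and a diagonal (resp.\ off-diagonal) pattern arises precisely when that argmax sits at $j=i$ (resp.\ $j=i-1$).

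Next I would compute the inner product explicitly. Writing $\mathrm{HoPE}(\mathbf{z}_i)=\mathrm{diag}(1,\mathbf{R}_{i,\Theta})\mathbf{z}_i$ as in \cref{Eq:hope} and using that each $\mathbf{R}_{i,\Theta}$ is an orthogonal Euclidean rotation obeying the composition law $\mathbf{R}_{i,\Theta}^\top\mathbf{R}_{j,\Theta}=\mathbf{R}_{(j-i),\Theta}$, the space-like parts contract to $(\mathbf{q}_i)_s^\top\mathbf{R}_{(j-i),\Theta}(\mathbf{k}_j)_s$ while the time-like parts give $-(\mathbf{q}_i)_t(\mathbf{k}_j)_t$. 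Because \textsc{HoPE} leaves the time coordinate unchanged, this time-like term is independent of relative position and only shifts the score by a constant, so it does not affect the argmax. Summing over the $d/2$ rotation blocks and applying the identity $a\cos(r\theta_l)+b\sin(r\theta_l)=C_l\cos(r\theta_l-\phi_l)$, the position-dependent part becomes $S(r)=\sum_{l=1}^{d/2}C_l\cos(r\theta_l-\phi_l)$, which recovers the relative-only dependence of \cref{prop:relative}.

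Finally I would realize both patterns through an explicit content-independent construction. Taking $\mathbf{W}^Q=\mathbf{W}^K=\mathbf{0}$ in the $\mathrm{HLT}$ projections makes the query/key space components equal to the biases $\mathbf{b}^Q,\mathbf{b}^K$, hence constant across positions; choosing these biases block-wise lets me set each amplitude $C_l>0$ and each phase $\phi_l$ freely. Setting $\phi_l=r_0\theta_l$ aligns every cosine to peak at $r=r_0$, so $S(r)$ is maximized at $r=r_0$, with $r_0=0$ giving the diagonal pattern and $r_0=-1$ giving the off-diagonal (preceding-neighbor) pattern.

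The main obstacle I anticipate is the discreteness and uniqueness of the optimum: I must show that $S(r)=\sum_l C_l\cos((r-r_0)\theta_l)$ with all $C_l>0$ attains a \emph{strict} maximum at the integer $r=r_0$ over the admissible causal range $r\in\{-i,\dots,0\}$, rather than merely at the continuous optimum. Since each summand is bounded by $C_l$ with equality only when $(r-r_0)\theta_l\in 2\pi\mathbb{Z}$, strictness follows provided the frequencies $\{\theta_l\}$ never simultaneously return all blocks to phase $0$ at a nonzero integer offset within the window; this holds for the geometric \textsc{RoPE} frequency schedule, whose smallest angle keeps $|(r-r_0)\theta_l|<2\pi$ over the relevant range. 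Verifying this, together with confirming that the constructed biases yield valid on-manifold queries and keys so the pattern is genuinely representable by a head, are the only non-routine points.
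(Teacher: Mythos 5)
Your proposal is correct and follows essentially the same route as the paper: both fix position-independent query/key content and use the rotation composition law so the score depends only on $j-i$, with the key content rotated relative to the query content by the target offset (your phase choice $\phi_l = r_0\theta_l$ is exactly the paper's off-diagonal construction $\mathbf{K}_i = \mathbf{R}_{1,\theta}\psi$, specialized to $r_0=-1$, and its diagonal case is your $r_0=0$), and both reduce strictness of the maximum to the fact that $r\theta_l \equiv 0 \ (\mathrm{mod}\ 2\pi)$ forces $r=0$ (the paper invokes Lemma A.1 of Barbero et al.\ where you use a frequency-window bound). The one step the paper carries out that you leave implicit is softmax concentration: it shows $\nu_{i,i} > 1-\epsilon$ by letting $\|\psi_s\|^2 \to \infty$, turning ``argmax at the diagonal'' into a genuinely diagonal attention pattern; your construction supports the same limit (scale the biases, hence the amplitudes $C_l$, and use the strict gap), but you should state it explicitly, since the proposition --- read as in Barbero et al.\ --- concerns where the attention \emph{mass} sits, not merely where the score is maximized.
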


We provide ablations comparing \textsc{HoPE} to prior hyperbolic positional encodings in \cref{appendix:ablation}.

\subsection{Hyperbolic Mixture-of-Curvature Module}\label{Sec:moc}
\vspace{-0.25cm}
Previous hyperbolic Transformer architectures fix each \textit{block} to a single hyperbolic manifold, forcing the entire sequence to be embedded with a fixed curvature, restricting the flexibility of the hidden representations. Mixture-of-Experts (MoE)  \cite{jacobs1991adaptive} used in Euclidean LLMs \cite{deepseekai2024deepseekv3technicalreport, lepikhin2021gshard, fedus2022switch, jiang2024mixtral, dai2024deepseekmoe}, where the model selectively activates only a subset of specialized "experts" for each token, enables LLMs to learn more diverse data patterns while remaining computationally efficient. However, the experts are still limited to one geometric space, restricting the collective granularity the experts can learn. Accommodating variable curvatures calls for a more flexible treatment: we propose the first \textit{Mixture-of-Curvature Experts} (\textsc{MiCE}) module, where each expert operates on a \textit{distinct curvature space}. Let $\mathbf{x}_t\in\mathbb{L}^{K_1,n}$ be the $t$-th token input, then $\mathrm{MiCE}^{N_s}_{N_r}:\mathbf{x}_t\in\mathbb{L}^{K,n}\to \mathbf{x}_t\in\mathbb{L}^{K,n}$, where $N_r$ is the number of routed experts and $N_s$ is the number of shared experts. First, we pass $\mathbf{x}_t$ through a gating module to obtain the gating scores for each routed expert, denoted as $g_{t,i}$ for $1\leq N_r$, given as, 
\begin{equation}\label{Eq:gating}
    \begin{split}
        g_{t,i} = \frac{g'_{t,i}}{\sum_{j=1}^{N_r}g_{t,j}};\,s_{t,j}=\mathrm{act}((\mathbf{x}_t)_s^\top \mathbf{y}_j);\,g'_{t,j} = \begin{cases}
        s_{t,j}, \,s_{t,j}\in\mathrm{Topk}(\{s_{t,k}\}_{k\leq N_r}, K_r)\\
        0\,\,\,\,\,\,\,\,\text{otherwise}
    \end{cases}
    \end{split}.
\end{equation}

\begin{figure}[]
    \centering
    \begin{minipage}{0.48\textwidth}
        \centering
        \includegraphics[width=0.8\textwidth]{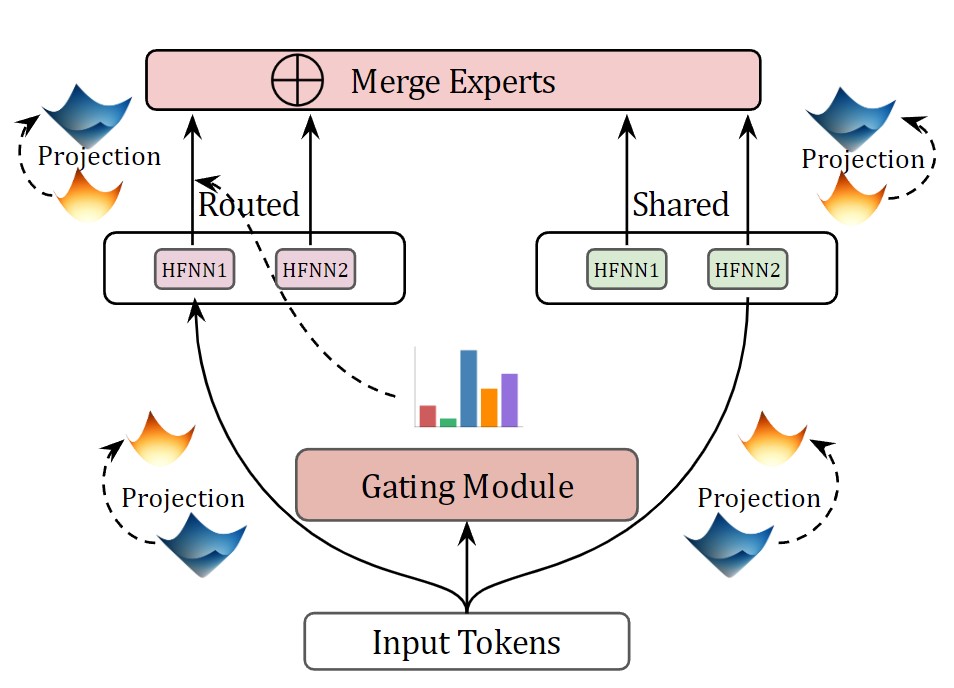}
        \subcaption{\textbf{\textsc{MiCE} module architecture.} Routed experts are selected through a gating module. The token are project from input manifold to expert manifold and then passed through each expert. The output of each expert are then project back to the input manifold and merged together through Lorentzian centroid. This modules allows experts to learn from distinct curvature spaces to allow for more granularity.}
        \label{fig:moc}
    \end{minipage}%
    \hfill
    \begin{minipage}{0.43\textwidth}
        \centering
        \vspace{-12pt}
        \includegraphics[width=0.9\textwidth]{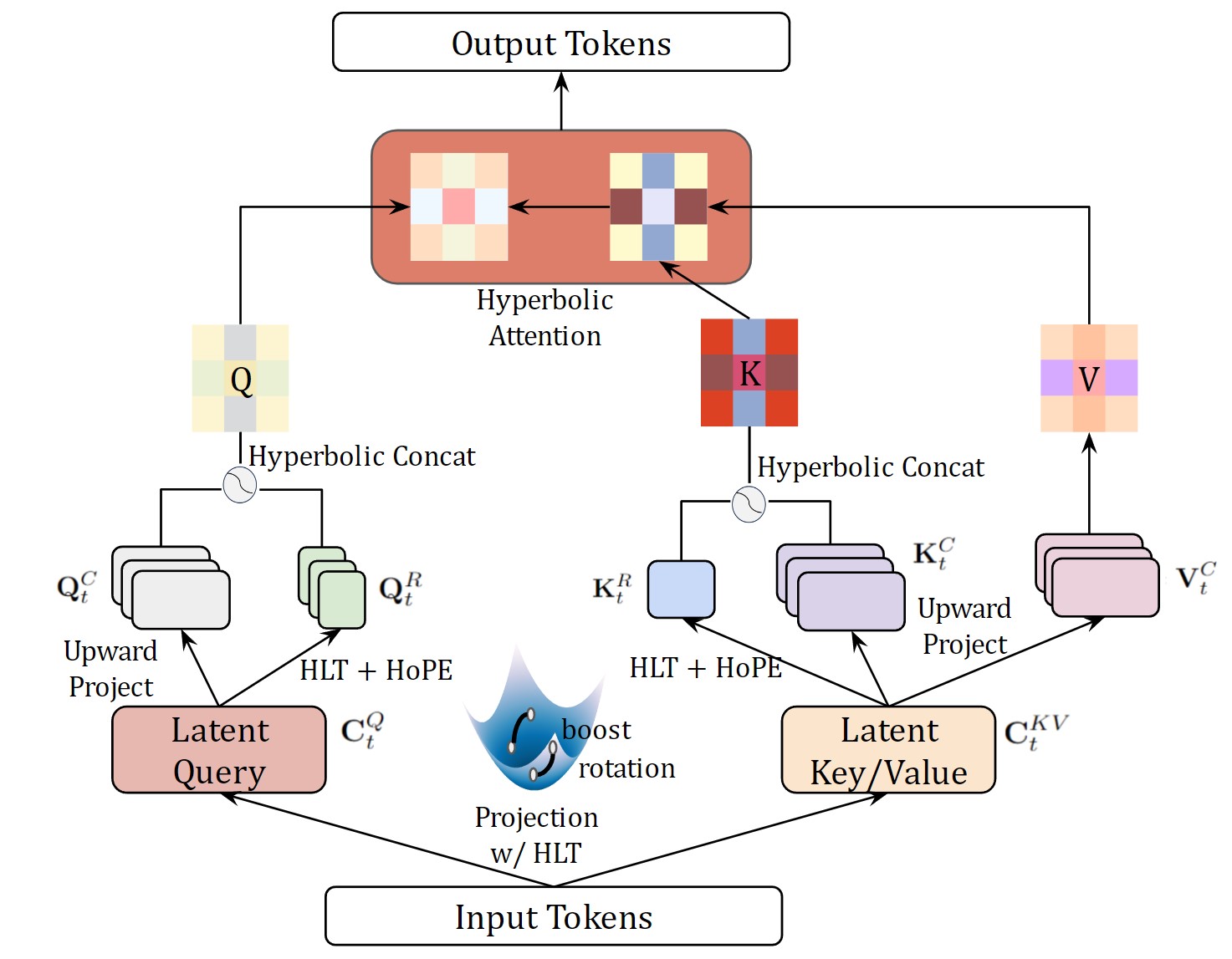}
        \subcaption{\textbf{\textsc{HMLA} framework.} The embeddings are projected into latent space and then upward projected into queries, keys, and values. Additional decoupled queries and a shared key are created for hyperbolic positional encoding through \textsc{HoPE}. The queries and keys are concatenated together before performing hyperbolic self-attention.}
        \label{fig:hmla}
    \end{minipage}
    \caption{Mixture-of-Curvature Experts (\textsc{MiCE}) and  hyperbolic Multi-Head Latent Attention (\textsc{HMLA}).}
    \label{fig:method}
\end{figure}

Here, $s_{t,j}$ is the token-expert affinity with activation function $\mathrm{act}$, $\mathbf{y}_j$ is the centroid vector of the $i$-th routed expert, $\mathrm{Topk}(S, A)$ picks the top $A$ values from set $S$, and $K_r$ is the number of activated experts. Then, the token is passed through each shared and routed expert. Let $\mathrm{HFFN}_{r,i}:\mathbb{L}^{K_{r,i}, m}\to \mathbb{L}^{K_{r,i}, m}$ be the routed experts and $\mathrm{HFFN}_{s,i}:\mathbb{L}^{K_{s,i}, m}\to \mathbb{L}^{K_{s,i}, m}$ be the shared experts, defined through hyperbolic feedforward networks. Here, the value of $K_{r,i}$ and $K_{s,i}$ can vary for each expert, i.e., \textbf{each expert lives on a distinct manifold}. To align the input's manifold and the experts' manifolds, first we project the tokens to the expert manifolds via $\mathbf{s}_{t,i} = \sqrt{K/K_{s,i}}\mathbf{x}_t$ and $\mathbf{r}_{t,i} = \sqrt{K/K_{r,i}}\mathbf{x}_t$. The projected token is passed through each expert and projected back to the input manifold, where we obtain $\mathbf{y}_{t,i} = \sqrt{K_{s,i}/K}\mathrm{HFFN}_{r,i}\left(\mathbf{s}_{t,i}\right)$ and $\mathbf{z}_{t,i} = \sqrt{K_{r,i}/K}\mathrm{HFFN}_{r,i}\left(\mathbf{r}_{t,i}\right)$. The output of $\mathrm{MiCE}^{N_s}_{N_r}$ is given by, \begin{equation}\label{Eq:MiCE}
\begin{split}
    \mathrm{MiCE}^{N_s}_{N_r}(\mathbf{x}_t)= \mathbf{x}_t\oplus_\mathcal{L}&\left(\frac{\sum_{i=1}^{N_s}\mathbf{y}_{t,i} +\sum_{i=1}^{N_r} \mathbf{z}_{t,i}}{\sqrt{-K}|\|\sum_{i=1}^{N_s}\mathbf{y}_{t,i} +\sum_{i=1}^{N_r} \mathbf{z}_{t,i}\||_\mathcal{L}} \right)
\end{split}.
\end{equation}
The constants $\sqrt{K_{s,i}/K}, \sqrt{K_{r,i}/K}$ project from the experts' manifolds to the input manifold, ensuring that the output of each shared and routed expert lives on the same manifold. The outputs are then combined through the Lorentzian centroid \cite{law2019lorentzian}, before performing a Lorentzian residual connection. Note that \textsc{MiCE} is indeed a valid hyperbolic module, which we expand on in \cref{appendix:archi}.

\subsection{Efficiency via Hyperbolic Multi-Head Latent Attention}
Previous hyperbolic Transformers for natural language applications focus mainly on \textit{hyperbolic self-attention}, synonymous to naive dot-product attention mechanism in Euclidean LLMs. However, recent Euclidean LLMs have gradually adopted more efficient attention methods for enhanced scalability. The quadratic attention mechanism and the extant hyperbolic self-attention formulation suffer from heavy memory challenges for large-scale training. In this section, we propose \textit{hyperbolic Multi-Head Latent Attention (\textsc{HMLA})} to alleviate this bottleneck. Inspired by Euclidean MLA \cite{deepseekv2, deepseekai2024deepseekv3technicalreport}, \textsc{HMLA} reduces the size of the KV cache during inference and the active memory consumption during training. We provide a high-level description of \textsc{HMLA}; detailed formulation can be found in \cref{sec:hmla}.

Let $\mathbf{x}_t\in\mathbb{L}^{K,nh_n}$ be the $t$-token, where $n$ is the embedding dimension and $h_n$ is the number of heads. First, we project $\x_t$ via $\mathrm{HLT}$ to latent queries and key-value vectors, obtaining $\mathbf{c}_t^Q,\mathbf{c}_t^{KV}$ of dimensions $n_q, n_{kv}$ respectively such that $n_q,n_{kv}\ll nh_n$. We then upward-project the latent vectors back to $h_n$ heads of dimension $n$ each via $\mathrm{HLT}$, obtaining $[\mathbf{k}_{t,i}^C]_{i\leq h_n}, [\mathbf{v}_{t,i}^C]_{i\leq h_n}$ from $\mathbf{c}_t^{KV}$, and $[\mathbf{q}_{t,i}^C]_{i\leq h_n}$ from $\mathbf{c}_t^Q$. Then the projected keys are processed by positional encodings. Following previous works \cite{deepseekv2, deepseekai2024deepseekv3technicalreport}, as RoPE is incompatible with MLA due to position coupling, we employ a decoupled \textsc{HoPE} scheme with \textsc{HMLA}. Through $\mathrm{HLT}$, we project $\mathbf{c}_t^Q$ to additional query vectors, $[\mathbf{q}_{t,i}^R]_{i\leq h_n}$, and $\mathbf{c}_t^{KV}$ to a shared key, $\mathbf{k}_t^R$, of dimension $n_rh_n$ and $n_r$ respectively ($n_r\ll nh_n$). The queries vectors, $\mathbf{q}_{t,i}^C, \mathbf{q}_{t,i}^R$, and the key vectors, $\mathbf{k}^C_{t,i}, \mathbf{k}^R_t$, are then concatenated together through Lorentzian concatenation \cite{qu2022autoencoding, yang2024hypformer} for each $i$, where we obtain the final query and key vectors $\mathbf{q}_{t,i}, \mathbf{k}_{t,i}$. The attention score and output are computed using \cref{Eq:hyp_att} with $[\mathbf{q}_{t,i}]_{t\leq h_n}, [\mathbf{k}_{t,i}]_{t\leq h_n}, [\mathbf{v}_{t,i}^C]_{t\leq h_n}$ as the queries, keys, and values, before concatenating all the heads together with Lorentzian concatenation. The concatenated result is passed through a final projection layer with $\mathrm{HLT}$. 

\textbf{Memory complexity.} During inference, \textsc{HMLA} only requires caching the latent key-value pairs. As a result, the memory footprint for \textsc{HMLA} is $O((n_{kv} + n_r)L)$, where $L$ is the number of layers. In contrast, the hyperbolic self-attention used in previous hyperbolic Transformers (\cref{Eq:hyp_att}) requires storing the full-sized keys and values, resulting in a memory complexity of $O(2nn_hL)$. By choosing $n_{kv}, n_r\ll nn_h$, we have $(n_{kv}+n_r)\ll 2nn_h$, resulting in a \textbf{significantly smaller memory footprint} while maintaining the same time complexity of $O((nn_h)^2)$. Additionally, the latent query projection also results in smaller active footprint during training. This collective mechanism enables far greater scalability. 

\subsection{Hyperbolic RMSNorm}
Previous works have not implemented hyperbolic root mean square normalization (RMSNorm), which is widely used in popular Euclidean LLMs \cite{dubey2024llama3, deepseekv2, deepseekai2024deepseekv3technicalreport} due to its stability and robustness in both forward and backward passes. Here, we propose hyperbolic RMSNorm to fill the gap:
\begin{equation}\label{Eq:rmsnorm}
    \mathrm{RMSNorm}_\mathcal{L}(\mathbf{x}) = \left[\sqrt{\|\mathrm{RMSNorm}(\mathbf{x}_s)\| - 1/K}, \mathrm{RMSNorm}(\mathbf{x}_s)\right]^\top.
\end{equation}
Here, RMSNorm denotes the corresponding Euclidean operation. \cref{Eq:rmsnorm} is equivalent to passing RMSNorm into the HRC functions from Hypformer \cite{yang2024hypformer}. This formulation is chosen over other methods of defining normalization layers through hyperbolic midpoint and tangent space operations \cite{van2023poincar, Bdeir2024fully} for better numerical stability and computational efficiency.

\textbf{Invariance to input scaling.}~Our formulation of hyperbolic RMSNorm is invariant to a scaling of inputs, giving us similar guarantees as Euclidean RMSNorm in terms of gradient stability during backpropagation and enhanced robustness to perturbations. We provide proofs for these guarantees in Appendix \ref{app:hyprmsnorm}.
\begin{proposition}\label{prop:rmsnorm-invariance}
    $\mathrm{RMSNorm}_{\mathcal{L}}$ is invariant to scaling of inputs $\mathbf{x}$ during both the forward and backward passes.
\end{proposition}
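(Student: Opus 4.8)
The plan is to reduce both claims to the single algebraic fact that the Euclidean $\mathrm{RMSNorm}$ is homogeneous of degree zero, i.e. $\mathrm{RMSNorm}(\lambda\mathbf{x}_s)=\mathrm{RMSNorm}(\mathbf{x}_s)$ for every scalar $\lambda\neq 0$, which holds because the root-mean-square factor $\tfrac{1}{\sqrt{n}}\|\mathbf{x}_s\|$ in the denominator cancels the $\lambda$ in the numerator. The first thing I would record is that $\mathrm{RMSNorm}_{\mathcal{L}}$ depends on its Lorentzian input $\mathbf{x}=[x_t,\mathbf{x}_s]^\top$ only through $\mathbf{x}_s$, and in fact only through the value $\mathbf{u} := \mathrm{RMSNorm}(\mathbf{x}_s)$: the space-like block of the output is exactly $\mathbf{u}$, while the time-like block is a fixed reconstruction map $\phi(\mathbf{u})=\sqrt{\|\mathbf{u}\|^2-1/K}$ applied to $\mathbf{u}$ to enforce membership in $\mathbb{L}^{K,n}$. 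Hence I can write $\mathrm{RMSNorm}_{\mathcal{L}}(\mathbf{x})=\Psi(\mathbf{u})$ with $\Psi(\mathbf{u})=[\phi(\mathbf{u}),\mathbf{u}]^\top$, a composition in which all dependence on the scale of $\mathbf{x}$ is funneled through the scale-invariant quantity $\mathbf{u}$.

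\textbf{Forward pass.} Scaling the input, $\mathbf{x}\mapsto\lambda\mathbf{x}$ (equivalently $\mathbf{x}_s\mapsto\lambda\mathbf{x}_s$, since the time coordinate is never read), leaves $\mathbf{u}=\mathrm{RMSNorm}(\mathbf{x}_s)$ unchanged by the degree-zero homogeneity above; therefore $\Psi(\mathbf{u})$, and with it both the space-like and time-like components of $\mathrm{RMSNorm}_{\mathcal{L}}$, are unchanged. The only thing left to check is that the reconstruction $\phi$ introduces no new scale dependence, which it cannot, as it is a function of the scale-invariant $\mathbf{u}$ alone.

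\textbf{Backward pass.} Here I would differentiate the forward identity $\mathrm{RMSNorm}_{\mathcal{L}}(\lambda\mathbf{x})=\mathrm{RMSNorm}_{\mathcal{L}}(\mathbf{x})$ with respect to $\mathbf{x}$. By the chain rule this yields $\lambda\, J(\lambda\mathbf{x})=J(\mathbf{x})$, hence $J(\lambda\mathbf{x})=\tfrac{1}{\lambda}J(\mathbf{x})$ for the Jacobian $J$, and in particular (differentiating in $\lambda$ at $\lambda=1$) $J(\mathbf{x})\,\mathbf{x}=\mathbf{0}$, so the radial input direction lies in the kernel of $J$. This is precisely the property responsible for gradient stability: rescaling the activations rescales the backpropagated sensitivity inversely, so the product of activation and gradient magnitudes stays controlled, mirroring the Euclidean RMSNorm guarantee. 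To transport this to the Lorentzian/Riemannian gradient I would note that the Riemannian gradient differs from the Euclidean one only by the constant diagonal metric $\mathfrak{g}_n^K=\mathrm{diag}(-1,1,\dots,1)$, which commutes with scalar multiplication and therefore preserves the $1/\lambda$ scaling of $J$.

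\textbf{Main obstacle.} The only genuine bookkeeping is the time-like coordinate. Its gradient is $\nabla_{\mathbf{u}}\phi(\mathbf{u})$ composed with $\partial\mathbf{u}/\partial\mathbf{x}_s$, so I must confirm that the extra factor $\nabla_{\mathbf{u}}\phi$ does not reintroduce input-scale dependence; since $\mathbf{u}$ is scale-invariant, $\nabla_{\mathbf{u}}\phi$ is evaluated at a scale-invariant point, and the whole time-like gradient inherits the same inverse-scaling law as the space-like block through $\partial\mathbf{u}/\partial\mathbf{x}_s$. Thus the potential difficulty, that appending a nonlinear time coordinate might spoil the clean Euclidean invariance, dissolves once everything is expressed as a function of $\mathbf{u}$, and no curvature-dependent term can break the argument because $K$ enters only through the fixed map $\phi$.
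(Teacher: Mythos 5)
Your proposal is correct, and the forward-pass half is essentially the paper's own argument: reduce everything to the degree-zero homogeneity of the Euclidean $\mathrm{RMSNorm}$ and observe that the time-like coordinate is a fixed function of the (scale-invariant) space-like output, so no curvature term can break the invariance. Your backward-pass argument, however, takes a genuinely different route. The paper wraps $\mathrm{RMSNorm}$ in a feed-forward block with gain $\mathbf{g}$, bias $\mathbf{b}$, and weights $\mathbf{W}$, and verifies that the three \emph{parameter} gradients $\partial L/\partial\mathbf{g}$, $\partial L/\partial\mathbf{b}$, $\partial L/\partial\mathbf{W}$ are unchanged under input/weight scaling, deferring the nontrivial $\mathbf{W}$ case to the original RMSNorm paper of Zhang and Sennrich. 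You instead differentiate the forward identity $\mathrm{RMSNorm}_{\mathcal{L}}(\lambda\mathbf{x})=\mathrm{RMSNorm}_{\mathcal{L}}(\mathbf{x})$ to obtain the Jacobian law $J(\lambda\mathbf{x})=\lambda^{-1}J(\mathbf{x})$ together with $J(\mathbf{x})\mathbf{x}=\mathbf{0}$. This is arguably the more fundamental fact: it is self-contained (no external citation needed), and the paper's parameter-gradient invariances follow from it, since the factor $\lambda$ a parameter contributes on the forward pass is cancelled by the $\lambda^{-1}$ of the Jacobian on the way back. What your route leaves implicit is exactly that last cancellation step — strictly speaking, inverse scaling of the input-Jacobian is not itself ``invariance,'' so to match the proposition as the paper reads it you should add the one-line deduction that gradients with respect to the gain/bias/upstream weights are therefore unchanged. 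Two minor points: the homogeneity holds only for $\lambda>0$ (a negative $\lambda$ flips the sign of the normalized output), not for all $\lambda\neq 0$ as you state; and the Riemannian-metric remark is harmless but unnecessary, since the paper's claim concerns the ordinary automatic-differentiation backward pass.
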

\begin{figure}[t!]
  \centering
  \includegraphics[width=0.9\textwidth]{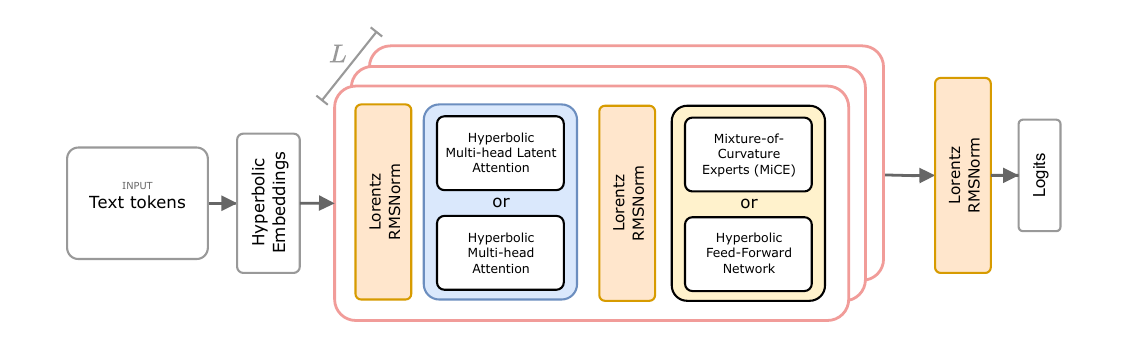}
  \caption{\textsc{HELM} architecture. The input tokens are mapped to hyperbolic word embeddings before being processed by a series of $L$ decoder blocks, comprising an attention block and an FFN block. The attention block (blue) can either be hyperbolic self-attention or HMLA, while the FFN block (yellow) can either be a HFFN or \textsc{MiCE} layer. The output of the decoder blocks is mapped to logits. Residual connections are omitted for brevity.}
  \label{fig:helm}
\end{figure}
\subsection{Overall Architecture for Hyperbolic Large Language Models (\name)}

We introduce the framework for hyperbolic LLMs (\textsc{HELM}) based on the modules we introduced and developed in \cref{Sec:method}. In particular, we develop \textbf{hyperbolic LLMs with Mixture-of-Curvature Experts} (\textsc{HELM-MiCE}), a class of hyperbolic LLMs with MoE modules where each expert functions in a distinct curvature space. We also construct \textbf{hyperbolic dense LLMs} (\textsc{HELM-D}), which resembles classic decoder-only LLMs such as Llama \cite{dubey2024llama3}. 

The overall architecture is as follows (see \cref{fig:helm}): tokenized text is first mapped to learned hyperbolic word embeddings, which are then passed through a series of hyperbolic decoder blocks, each consisting of two components: 1) the attention component, where the embeddings are normalized by a $\mathrm{RMSNorm}_\mathcal{L}$ layer, then processed by an attention block such as \textsc{HMLA} or self-attention, and finally added to the embeddings through \cref{Eq:res}; and 2) the HFFN component, where the processed embeddings are again normalized by $\mathrm{RMSNorm}_\mathcal{L}$ before being passed through a HFFN block and residually added to the output of the attention block (\cref{Eq:res}). For \textsc{HELM-MiCE}, the HFFN block can either be a dense block such as $\mathrm{HFFN}_{SG}$ or a \textsc{MiCE} block as defined in \cref{Sec:moc}, where $\mathrm{HFNN}_{SG}$ is a hyperbolic SwiGLU FFN we built to be consistent with Euclidean LLMs (see \cref{Sec:ffn} for details). \densename contains only dense HFFN layers.  The output of the final decoder block is then normalized once again before projected to logits for next-token prediction. 

\section{Experiments}\label{sec:experiments}
We evaluate both \name variants' ability to answer MCQ questions in two popular benchmarks, MMLU \cite{hendrycks2021measuring} and ARC \cite{clark2018think}. Additionally, we train an ablation \moename with constant curvature across experts, comparing with \moename models with varying curvature across experts. 

\textbf{Runtime, Memory, and additional experiments.} One common concern for hyperbolic models is that they could incur significant computational overhead when compared with their Euclidean counterparts. HELM models are within 1.55X in runtime and 1.11X in memory usage of the Euclidean counterparts for both the 100M and 1B models. Additional details are shown in \cref{appendix:ablation}. We also show additional ablation and comparison with prior hyperbolic language models, for tasks such as machine translation, in \cref{appendix:ablation}. 

\subsection{Multichoice Benchmarking}
We evaluate both \moename and \densename at 100M-parameter scales, across a variety of benchmarks spanning STEM problem-solving, general knowledge, and commonsense reasoning. The dense models also serve as an ablation comparison with the \textsc{MiCE} models. We further scale the \moename to 1B parameters as the smaller \moename model outperformed \densename overall. Additional details regarding implementation and datasets can be found in \cref{appendix:archi}.

\begin{table}[]
\caption{Multichoice question-answering accuracy (\%) of \textsc{HELM} models and their Euclidean counterparts. Bold denotes highest accuracy and underline denotes second-highest. \textsc{HELM} models consistently outperform their Euclidean counterparts, with \moename achieving the highest accuracy overall. The 100M model performances are average over 3 runs.}\label{tab:mcq}
\resizebox{\textwidth}{!}{%
\begin{tabular}{@{}lccccccc@{}}
\toprule
\textbf{Model}                & \textbf{\# Params} & CommonsenseQA & \multicolumn{1}{l}{HellaSwag} & OpenbookQA & MMLU   & ARC-Challenging & \multicolumn{1}{l}{Avg} \\ 
\multicolumn{1}{c}{\textbf{}} &                     & 0-Shot        & 0-Shot                        & 0-Shot     & 5-Shot & 5-Shot          & -                       \\
\midrule
\textsc{LLaMA}     & 115M                & $\mathbf{20.9}\pm0.3$        & $25.1\pm0.3$                        & $25.4\pm0.2$     & $23.4\pm0.5$ & $21.0\pm0.2$          & $23.2\pm0.2$                  \\
\textbf{\textsc{HELM-D} }   & 115M                & $\underline{20.3\pm0.2}$        & $25.9\pm0.1$                        & $27.1\pm0.4$     & $\underline{25.6\pm0.2}$ & $21.4\pm0.3$          & $24.1\pm1$                  \\
\hdashline
\textsc{DeepSeekV3}                    & 120M                & $19.3\pm0.2$        & $25.3\pm0.1$                        & $24.0\pm0.4$     & $23.9\pm0.3$ & $22.2\pm0.3$          & $22.2\pm0.1$                  \\
\textbf{\textsc{HELM-MiCE}}                      & 120M                & $19.7\pm0.3$        & ${25.9\pm0.2}$                        & $\underline{27.7\pm0.4}$     & $24.4\pm0.2$ & $\underline{23.2\pm0.5}$          & $\underline{24.1\pm0.1}$                  \\
\textsc{DeepSeekV3}                    & 1B                  &   $19.5$            &                  $\underline{26.2}$            &     $\underline{27.4}$        &    $23.6$    &           $22.7$       &           $23.9$             \\
\textbf{\textsc{HELM-MiCE}}                      & 1B                  &     ${19.8}$          &     $\mathbf{26.5}$                        &      $\mathbf{28.4}$      &   $\textbf{25.9}$     &       $\mathbf{23.7}$          &               $\mathbf{24.9}$          \\ \bottomrule
\end{tabular}%
}
\end{table}
\textbf{Training Setup.} We use the LLaMA3.1-8B tokenizer~\cite{dubey2024llama3} for all models, with a vocabulary size of 128K. For \densename, we use hyperbolic self-attention and $\mathrm{HFFN}_{SG}$ for the decoder block. We use 6 heads and 6 layers for the 100M model. For \moename, we use \textsc{HMLA} and a mixture of dense and \textsc{MiCE} layers, each with 2 active experts and one shared expert. We use 6 heads, 6 layers, and 4 experts per layer for the 100M model, and we use 14 heads, 16 layers, and 8 experts per layer for the 1B model. The experts have curvature initiated uniformly from $-0.1$ to $-2.0$. Additionally, we incorporate the auxiliary-loss-free load balancing scheme and complementary sequence-wise auxiliary loss from DeepSeekV3 \cite{deepseekai2024deepseekv3technicalreport} to encourage load balancing among the experts. Each model was trained on a cluster of 4 NVIDIA A6000 and 4 NVIDIA A800 GPUs with model and data parallelism, where at most 4 GPUs were used by each model.

We use the English portion of the Wikipedia dataset \cite{wikidump} for training, comprising $\sim$6.4M rows of raw text, or roughly 5B tokens.

\textbf{Hyperbolic word embedding.} Previous works \cite{chen2021fully, HNN++} directly map input tokens to trained hyperbolic embeddings. However, we experienced model instability when training the 1B models with this method. Therefore, we only train the space-like dimension of the Lorentz word embeddings.

\textbf{Baselines.} 
We test against two popular Euclidean models: one dense model and one MoE model. For the dense model, we test HELM-D against LLaMA \cite{dubey2024llama3}. For the MoE model, we test \moename against DeepSeekV3 \cite{deepseekai2024deepseekv3technicalreport}. We train both baselines from scratch at the same parameter scales as their HELM counterparts, with the same dataset, tokenizer, and training setup.

\textbf{Benchmarks.} 
We evaluate on a variety of benchmarks, including STEM and general knowledge reasoning benchmarks such as MMLU \cite{hendrycks2021measuring}, ARC-Challenging \cite{clark2018think}, and OpenbookQA \cite{mihaylov2018openbookqa}, and commonsense reasoning benchmarks such as CommonsenseQA \cite{talmor2019commonsenseqa} ,such HellaSwag \cite{zellers2019hellaswag}. For MMLU and ARC, we use 5-shot predictions. For CommonsenseQA, OpenbookQA, and HellaSwag, we use 0-shot prediction.

\textbf{Results.} The results are shown in \cref{tab:mcq}. We report the accuracy of the models' abilities to answer multiple choice questions from the benchmarks. We mainly focus on comparing models within the same architectural sub-family, i.e., dense models and MoE models are separately tested against each other. Both \name variants consistently outperform their Euclidean counterparts. In particular, \textit{the smaller \densename model achieves higher accuracy than LLaMA on four out of the five benchmarks}, whereas the \textit{smaller \moename model outperforms the smaller DeepSeekV3 model on all five benchmarks}. When comparing the $\sim$100M-scale \densename and \moename models, the latter achieves comparable performance despite using overall significantly fewer active parameters. This reflects the effectiveness of using more flexible geometry. \textit{For the larger 1B-parameter models, \moename consistently outperforms the 1B DeepSeekV3 model, achieving the highest accuracy overall.} While we don't provide standard deviation from multiple runs for the 1B models as is typical of models this size, we provide results of model performance across different stages of training in \cref{appendix:ablation} and show that \moename consistently outperform its Euclidean counterpart.

In all cases, the hyperbolic LLMs achieve better overall scores across the five benchmarks. The \name models also always achieve higher accuracy on the \textit{more difficult} reasoning benchmarks, namely MMLU and ARC-Challenging. This suggests better reasoning capability afforded by incorporating more suitable geometries in the embedding space. Overall, our results demonstrate the superiority of hyperbolic LLMs – in particular, the Mixture-of-Curvature Experts framework – in answering complex multiple-choice questions across a wide range of domains.

\subsection{Ablating Distinct Curvature Learning with \moename}
\begin{table}[]
\caption{Ablation accuracy, where we compare \moename with a variant where all experts have the same curvature value, denoted as \textsc{MiCE-Const}. Bolding denotes the highest accuracy and underline denotes the second-highest. Euclidean \textsc{DeepSeekV3} results are shown for reference. Overall, \moename consistently achieves the highest accuracy, while both hyperbolic models still outperform the Euclidean counterpart. }\label{tab:ablation}
\resizebox{\textwidth}{!}{%
\begin{tabular}{@{}lccccccc@{}}
\toprule
\textbf{Model}                & \textbf{\# Params} & CommonsenseQA & \multicolumn{1}{l}{HellaSwag} & OpenbookQA & MMLU   & ARC-Challenging & \multicolumn{1}{l}{Avg} \\ 
\multicolumn{1}{c}{\textbf{}} &                     & 0-Shot        & 0-Shot                        & 0-Shot     & 5-Shot & 5-Shot          & -                       \\
\midrule
\textsc{DeepSeekV3}                    & 120M                & $19.2$        & $25.2$                        & $23.4$     & $\underline{24.2}$ & $21.8$          & $22.8$                  \\
\hdashline
  \textsc{MiCE-Const}   & 120M                &    $\mathbf{20.0}$    &                  $\underline{25.6}$     &  $\underline{27.0}$   & $23.5$ &   $\underline{22.3}$      &        $\underline{23.7}$         \\
\textbf{\textsc{HELM-MiCE}}                      & 120M                & $\underline{19.7\pm0.3}$        & ${\mathbf{25.9}\pm0.2}$                        & $\mathbf{27.7}\pm0.4$     & $\mathbf{24.4}\pm0.2$ & $\mathbf{23.2}\pm0.5$          & $\mathbf{24.1}\pm0.1$                \\\bottomrule
\end{tabular}%
}
\end{table}
To assess the effectiveness of each expert operating in a distinct curvature space, we train a 120M-parameter \moename model where the curvature of each expert is fixed to $-1.0$, which we denote as \textsc{MiCE-Const}. Consequently, \textsc{MiCE-Const} embeds the entire token sequence into a fixed space of curvature $-1.0$ similar to a dense model. \textsc{MiCE-Const} is trained with the same setup as the preceding models. We show the results in \cref{tab:ablation}. \moename outperforms the constant-curvature \textsc{MiCE-Const} in 4 out of the 5 benchmarks and achieves the higher overall accuracy, demonstrating the effectiveness of learning more expressive presentation by setting each expert to learn within a distinct manifold. Notably, \textsc{MiCE-Const} still outperforms the Euclidean DeepSeekV3 baseline on all 5 of the benchmarks, further demonstrating the effectiveness of hyperbolic LLMs over their Euclidean counterparts. We don't provide statistics for ablation baselines due to the need for excessive compute. Nevertheless, the results remain statistically significant as demonstrated in \cref{tab:ablation}.

\subsection{Qualitative Studies on Semantic Hierarchy Modeling}
In this section, we qualitatively access the ability of \name to model semantic hierarchy against its Euclidean counterparts. Our investigation of final-layer embedding distributions have found that \name learns representations where more generic words tend to cluster in areas of smaller norm and more specific words tend to have larger norms. In \cref{table:embedding norm}, we provide case studies for \moename 1B and DeepseekV3 1B, where we show embedding norm in the final layers for words of varying levels of specificity (top table) and for a sample question taken from the MMLU benchmark. For HELM-MiCE, more generic words (e.g., subject) are clustered closer to the origin than more specific words (e.g., biology), which has a smaller norm than even more specific words (e.g., photosynthesis). However, this does not necessarily hold for the DeepseekV3 1B model, demonstrating how HELM-MiCE better handles semantic hierarchies. We will provide low-dimensional visualization of these embeddings in our revision. The hierarchical organization of the space enables the HELM models to sometimes better navigate the embedding space and obtain better performance.

\section{Conclusion}\label{sec:conclusion}
In this work, we introduce \name, a family of fully hyperbolic large language models trained at hundred-million and billion-parameter scales. Operating entirely in hyperbolic space, \name models are better aligned with the variable geometric structure of text and token distributions. We develop \textsc{MiCE} modules to construct \moename variants, enabling fine-grained geometric learning and more expressive and geometrically flexible hidden representations. We further introduce \textsc{HMLA} mechanism to enable \name models to be memory efficient and improve scalability. We also introduce the \textsc{HoPE} and $\text{\textsc{RMSNorm}}_{\mathcal{L}}$ modules, which are fundamental to building modern hyperbolic LLMs, and support them with extensive theoretical analysis and guarantees. Trained on 5B tokens, \name models outperform their Euclidean counterparts across benchmarks in STEM reasoning, commonsense reasoning, and general knowledge. Nevertheless, the research presented has a few limitations. Due to computational constraints, our experiments only compare \name to Euclidean LLMs trained on the same 5B tokens, which have less representational capacity when compared to the commercially available LLMs trained on much more extensive data~\cite{gpt-neox-20b, achiam2023gpt, dubey2024llama3, gemma2024, deepseekai2024deepseekv3technicalreport}. Additionally, we chose the Wikipedia dataset for its widely accepted reliability. However, the trained models might be under-exposed to areas such as mathematical reasoning as a result. Future work could explore incorporating scaling laws \citep{kaplan2020scalinglawsneurallanguage, hoffmann2022trainingcomputeoptimallargelanguage} for hyperbolic LLMs across larger compute and data frontiers to investigate their potential.

  \begin{table}[t]
  \caption{Case study investigate of embedding norm in the final layer of \name (1B) and DeepseekV3 (1B). Top: embedding norm of words of varying levels of specificity; Bottom: embedding norm of a question taken from the MMLU benchmark. For \moename, more generic words are clustered closer to the origin than more specific words, which has a smaller norm than even more specific words. However, this does not necessarily hold for the DeepseekV3 1B model.}\label{table:embedding norm}
\centering
\small
\resizebox{0.99\textwidth}{!}{
\begin{tabular}{p{0.45\linewidth}cccc}
\toprule
\textbf{Words} & \multicolumn{2}{c}{\textbf{HELM-MiCE}} & \multicolumn{2}{c}{\textbf{DeepseekV3}} \\
\cmidrule(lr){2-3}\cmidrule(lr){4-5}
 & \textbf{Average Norm} & \textbf{Range} & \textbf{Average Norm} & \textbf{Range} \\
\midrule
to, in, have, that, and, is, for & 35.930 & 35.890\textasciitilde{}35.951 & 33.725 & 33.660\textasciitilde{}33.800 \\
study, research, subject, papers, category & 36.080 & 36.030\textasciitilde{}36.033 & 33.735 & 33.668\textasciitilde{}33.776 \\
biology, physics, chemistry, mathematics, computer science & 36.155 & 36.033\textasciitilde{}36.270 & 33.720 & 33.658\textasciitilde{}33.776 \\
algebra, geometry, photosynthesis, cellular respiration, genetics & 36.288 & 36.133\textasciitilde{}36.484 & 33.741 & 33.622\textasciitilde{}33.826 \\
\bottomrule
\end{tabular}
}
\end{table}
\begin{table}[t]
\centering
\small
\resizebox{0.99\textwidth}{!}{
\begin{tabular}{p{0.44\linewidth}c p{0.44\linewidth}c}
\toprule
\multicolumn{2}{c}{\textbf{HELM-MiCE}} & \multicolumn{2}{c}{\textbf{DeepseekV3}} \\
\cmidrule(lr){1-2}\cmidrule(lr){3-4}
\textbf{Words} & \textbf{Norm Range} & \textbf{Words} & \textbf{Norm Range} \\
\midrule
A, How, does, if, there, have, is, any, with, of & 36.031\textasciitilde{}36.396
& is, a, connecting, graph, there, edges, complete, have, of & 33.668\textasciitilde{}33.768 \\
discrete, vertices, edges, connecting, pair, graph, complete, many, 10 & 36.506\textasciitilde{}36.717
& discrete, 10, how, if, pair, does, with, A, vertices, any & 33.772\textasciitilde{}33.908 \\
\bottomrule
\end{tabular}
}
\end{table}

\section*{Acknowledgments}
This work was supported in part by the National Science Foundation (NSF) IIS Div Of Information \& Intelligent Systems 2403317 and Army Research Office contract W911NF-23-1-0088. We also acknowledge support in part from the Silicon Valley Community Foundation, an Amazon research award, the Yale AI Engineering Research Grant from Yale Office of the Provost, and an LEAP-U Sponsored Research from Samsung Research America. Moreover, this research has greatly benefited from the discussions and research talks held at the IMS-NTU Joint Workshop on Applied Geometry for Data Sciences. We also thank Ngoc Bui (Yale Univeristy) for useful feedback and discussion.

\bibliographystyle{plainnat}
\bibliography{main}

\newpage
\appendix

\section*{\Huge Appendix}

\startcontents[sections]
\printcontents[sections]{l}{1}{\setcounter{tocdepth}{2}}

\section{Proofs and Details of Theoretical Results}\label{appendix:proofs}

\subsection{Ollivier-Ricci Curvature}\label{app:curvature-defn}

Ricci curvature is a geometric object that measures average \textit{geodesic dispersion} on a Riemannian manifold, i.e., whether straight paths on a surface remain parallel (zero curvature), converge (positive curvature), or diverge (negative curvature). Ollivier-Ricci curvature is the discrete analog for graphs that do not have a notion of tangent structure. 

Suppose we have a graph $\mathcal{G}(V, E)$. For a node $i \in V$, we define probability measures $\mu_i$ by:
\begin{align}
    \mu_i(j) &= \begin{cases} 
      \frac{1}{\operatorname{deg}(i)} & (i,j) \in E \\
      0 & \text{otherwise.}
   \end{cases} \nonumber
\end{align}
For $i, j \in V$, the Ollivier-Ricci curvature is given by,
\begin{align}
    \kappa_\mathcal{G}(i, j) &= 1 - \frac{W_1^\mathcal{G}(\mu_i, \mu_j)}{d(i,j)}. \nonumber
\end{align}
Here, $W^\mathcal{G}_1$ is the 1-Wasserstein distance between measures, $\mu_i$ and $\mu_j$. Intuitively, the curvature is defined as random walks between $i$ and $j$. If the random walks tend to stay at equal distances, $\kappa(i,j)=0$; if they diverge, $\kappa(i,j) < 0$; and if they converge, $\kappa(i,j) > 0$. Since curvature is a local property of a Riemannian surface, for graphs, we examine local neighborhoods with a step size of 1. We thus choose $\mu_i(j) = \frac{1}{\operatorname{deg}(j)}$ if $(i,j) \in E$, otherwise $0$.

In our preliminary analysis of popular decoder-only LLMs (Figure \ref{fig:intro-fig}), we draw $k$-nearest-neighbors graphs with the final-layer token embeddings for a collection of prompts from RedPajama \citep{weber2024redpajamaopendatasettraining} to ascertain their geometric structure. We observe a high variability of negative curvature values across tokens, hinting at their non-Euclidean nature. \citet{robinson2024structuretokenspacelarge} further allude to the non-Euclidean token subspace, necessitating language models that can accommodate this unique geometry.

\subsection{\textsc{HoPE} is a Lorentz Rotation}
Here, we expand on \textsc{HoPE} being a Lorentz rotation. Since $\mathbf{R}_{i,\Theta}$ is a Euclidean rotation, we have $\|\z\| = \|\mathbf{R}_{i,\Theta}\z\|$. Then, since for any Lorentz vector $\z\in\mathbb{L}^{K,n}$ we have $z_t = \sqrt{-1/K + \|\z_s\|} \in \mathbb{R}$. Computing $\left(\mathbf{R}_{i,\Theta}\z\right)_t = \sqrt{-1/K + \|\mathbf{R}_{i,\Theta}\z_s\|} = \sqrt{-1/K + \|\z_s\|} = z_t$. Thus, \textsc{HoPE} does not affect the time-like dimension of $\z$, so we have, \[\mathrm{HoPE}(\mathbf{z}) = 
\begin{pmatrix}
    1 & 0\\
    0 & \mathbf{R}_{i,\Theta}
\end{pmatrix}\mathbf{z},
\]
making \textsc{HoPE} a valid Lorentz rotational operation.

\subsection{Proposition \ref{prop:relative}: \textsc{HoPE} is a function of embeddings and relative position only}\label{app:proof-hope-relative}
\begin{proposition*}
    Let $\mathbf{X}$ be $T$ tokens with $\mathbf{x}_i\in\mathbb{L}^{K,d}$. Let $\mathbf{Q}, \mathbf{K}$ be queries and keys as in \cref{Eq:hyp_att}. Then $-d^2_\mathcal{L}\left(\mathrm{\mathrm{HoPE}\left(\mathbf{q}_a
    \right), \mathrm{HoPE}\left(\mathbf{k}_b\right))}\right) = g(\mathbf{x}_a, \mathbf{x}_b; a-b)$ for some function $g$.
\end{proposition*}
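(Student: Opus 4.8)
The plan is to reduce the claim about the squared Lorentz distance to a claim about the Lorentzian inner product, and then to exploit the group structure of the rotation matrices $\mathbf{R}_{i,\Theta}$, exactly mirroring the classical RoPE argument but with an extra time-like coordinate to track. The starting observation is that on $\mathbb{L}^{K,d}$ the geodesic distance satisfies $d_\mathcal{L}(\mathbf{x},\mathbf{y}) = \tfrac{1}{\sqrt{-K}}\operatorname{arccosh}\!\left(K\langle\mathbf{x},\mathbf{y}\rangle_\mathcal{L}\right)$, so that $-d_\mathcal{L}^2(\mathbf{x},\mathbf{y})$ is a fixed, position-independent function of the single scalar $\langle\mathbf{x},\mathbf{y}\rangle_\mathcal{L}$ (the only other ingredient being the constant curvature $K$). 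Consequently it suffices to prove that $\langle \mathrm{HoPE}(\mathbf{q}_a),\mathrm{HoPE}(\mathbf{k}_b)\rangle_\mathcal{L}$ depends only on the embeddings $\mathbf{x}_a,\mathbf{x}_b$ and on the difference $a-b$.

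Next I would invoke the Lorentz-rotation form of HoPE established just above the proposition, namely $\mathrm{HoPE}(\mathbf{z}_i)=\left(\begin{smallmatrix}1&0\\0&\mathbf{R}_{i,\Theta}\end{smallmatrix}\right)\mathbf{z}_i$, which leaves the time-like coordinate fixed and rotates only the space-like part. Expanding $\langle\cdot,\cdot\rangle_\mathcal{L}$ then splits the inner product into a time-like term $-(q_a)_t (k_b)_t$ and a space-like term $(\mathbf{q}_a)_s^\top \mathbf{R}_{a,\Theta}^\top \mathbf{R}_{b,\Theta}(\mathbf{k}_b)_s$. The time-like term carries no positional factor whatsoever: since $\mathbf{q}_a$ and $\mathbf{k}_b$ are the $\mathrm{HLT}$ images of $\mathbf{x}_a$ and $\mathbf{x}_b$, both $(q_a)_t$ and $(k_b)_t$ are functions of $\mathbf{x}_a,\mathbf{x}_b$ alone, so this term is already of the required form.

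The crux is the space-like term. Each $\mathbf{R}_{i,\Theta}$ is block-diagonal with $2\times2$ planar-rotation blocks $\mathbf{R}_{i,\theta_l}$ of angle $i\theta_l$, so $\mathbf{R}_{a,\Theta}^\top \mathbf{R}_{b,\Theta}$ is block-diagonal with blocks $\mathbf{R}_{a,\theta_l}^\top \mathbf{R}_{b,\theta_l}$. By additivity of planar rotations each block equals the single rotation of angle $(b-a)\theta_l$, whence $\mathbf{R}_{a,\Theta}^\top \mathbf{R}_{b,\Theta} = \mathbf{R}_{(b-a),\Theta}$, a matrix depending only on $a-b$. Substituting back, the space-like term becomes $(\mathbf{q}_a)_s^\top \mathbf{R}_{(b-a),\Theta}(\mathbf{k}_b)_s$, a function of $\mathbf{x}_a,\mathbf{x}_b$ (through the query and key projections) and of $a-b$ only. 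Adding the two terms exhibits $\langle \mathrm{HoPE}(\mathbf{q}_a),\mathrm{HoPE}(\mathbf{k}_b)\rangle_\mathcal{L}$, and hence $-d_\mathcal{L}^2$, in the claimed form $g(\mathbf{x}_a,\mathbf{x}_b;a-b)$.

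The only genuinely delicate point is the reduction in the first paragraph: one must check that the $\operatorname{arccosh}$ relation between distance and inner product introduces no residual positional dependence. This is clean because HoPE is a Lorentz isometry, so it preserves $\langle\mathrm{HoPE}(\mathbf{z}),\mathrm{HoPE}(\mathbf{z})\rangle_\mathcal{L}=1/K$ and keeps both points on the manifold, and because the outer transform $t\mapsto \tfrac{1}{-K}(\operatorname{arccosh}(Kt))^2$ depends only on the fixed curvature. Everything else is the hyperbolic counterpart of the standard RoPE computation, with the time-like coordinate contributing merely a position-free additive term.
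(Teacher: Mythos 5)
Your proof is correct, and its skeleton matches the paper's: both expand $-d^2_\mathcal{L}$ through the Lorentzian inner product, observe that the time-like product $(q_a)_t(k_b)_t$ carries no positional dependence because $\mathrm{HoPE}$ is a Lorentz rotation fixing the time coordinate, and then show that the space-like term depends only on the embeddings and $a-b$. Where you differ is in how that last, crucial step is executed. The paper reduces to $d=2$, parametrizes $(\mathbf{q}_a)_s$ and $(\mathbf{k}_b)_s$ in polar/complex form, and checks that the radial components are rotation-invariant while the angular components shift by $a\sigma$ and $b\sigma$ respectively, so only the difference $(a-b)\sigma$ survives --- essentially transplanting the original RoPE computation of Su et al.\ into the Lorentz setting. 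You instead argue at the matrix level: $\mathbf{R}_{a,\Theta}^\top\mathbf{R}_{b,\Theta}=\mathbf{R}_{(b-a),\Theta}$ by block-wise additivity of planar rotations, so the space-like term is $(\mathbf{q}_a)_s^\top\mathbf{R}_{(b-a),\Theta}(\mathbf{k}_b)_s$ outright. Your rendering is cleaner and a bit more general in presentation: it needs no reduction to $d=2$ and no coordinates, only the group law for rotations, whereas the paper's coordinate computation has the side benefit of setting up the angular machinery reused in the decay analysis. One small mismatch worth flagging: you take $d_\mathcal{L}$ to be the geodesic ($\operatorname{arccosh}$) distance and spend your final paragraph checking that the outer transform adds no positional dependence; in the paper, $d^2_\mathcal{L}$ is the squared Lorentzian norm of the difference, $\tfrac{2}{K}-2\langle\cdot,\cdot\rangle_\mathcal{L}$, which is affine in the inner product, so your ``delicate point'' is even more immediate than you suggest. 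Either reading is harmless, since both distances are fixed, position-free functions of the Lorentzian inner product alone.
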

\begin{proof}
    We will denote $\mathrm{HoPE}(\mathbf{q_a}), \mathrm{HoPE}(\mathbf{k_b})$ as $f_q(\x_a), f_k(\x_b)$ where $f_q,f_k$ denotes the function that projects the word embeddings to queries and keys, and then applying \textsc{HoPE}. In practice, the projection is done through a hyperbolic linear layer, which we take to be $\mathrm{HLT}$ from \cref{sec:related}. It suffices to prove this proposition for the case of $d=2$, since \textsc{HoPE} does not affect the time-like dimension of the inputs and $\mathbf{R}_{i,\Theta}$ acts independently on each 2D block. 
    First, note that we have \begin{align*}
        -d_\mathcal{L}^2(f_q(\x_a), f_k(\x_b)) &= \frac{2}{K} - 2\langle f_q(\x_a), f_k(\x_b)\rangle_\mathcal{L}\\
        &= \frac{2}{K} + 2\left(f_q(\x_a)_tf_k(\x_b)_t\right) - 2\langle f_q(\x_a)_s, f_k(\x_b)_s\rangle,
    \end{align*}
    where $\langle\cdot, \cdot\rangle_\mathcal{L}$ denotes Lorentzian inner product and $\langle\cdot, \cdot\rangle$ denotes the regular Euclidean inner product. Since \textsc{HoPE} is a Lorentz rotation, the term $2\left(f_q(\x_a)_tf_k(\x_b)_t\right)$ is simply $2((\mathbf{q}_a)_t, (\mathbf{k}_b)_t) = 2\left(\sqrt{\mathbf{W^Q}\x_a-1/K}\sqrt{\mathbf{W^K}\x_b-1/K}\right)$, so we focus on the inner product $\langle f_q(\x_a)_s, f_k(\x_b)_s\rangle$. Then, by assuming that $d=2$, we have $(f_q(\x_a))_s, (f_k(\x_b))_s\in\R^2$; hence, we can parametrize these vectors by their radial and angular components. For simplicity, denote $(f_q(\x_a))_s, (f_k(\x_b))_s$ as $\mathbf{a}, \mathbf{b}$ respectively. Then, write $\langle \mathbf{a}, \mathbf{b}\rangle$ as a function $g'$. Afterwards, we parametrize the vectors as \begin{equation}
        \begin{split}
            &\mathbf{a} = \varphi_q(\mathbf{\x}_a, a)e^{i\vartheta_q(\x_a, a)}\\
            &\mathbf{b} = \varphi_k(\mathbf{\x}_b, b)e^{i\vartheta_k(\x_b, b)}\\
            &g'= \varphi_ge^{\vartheta_{g'}},
        \end{split}
    \end{equation}
    where $\varphi_{\{q,k,g'\}}$ denote the radial component and $\vartheta_{\{q,k,g'\}}$ denote the angular component. Note that it suffice to show that under \textsc{HoPE}, we can express $\varphi_{g'}, \vartheta_{g'}$ as a function of the word embeddings and relative position. To see this, note that by definition of ${g'}$ we have \begin{equation}\label{Eq:rot}
        \begin{split}
            &\varphi_q(\x_a, a)\varphi_k(\x_b,b) = \varphi_{g'}\\
            &\vartheta_q(\x_a,a) - \vartheta_k(\x_b,b) = \vartheta_{g'}.
        \end{split}
    \end{equation}
    Now, given the fact that \textsc{HoPE} acts via Euclidean rotation on the time-like dimension of any vector, we have $\varphi_q(\x_a,a) = \varphi_q(\x_{a}, a'), \varphi_k(\x_b,b) = \varphi_k(\x_{b}, b')$ for any $a', b'$. In particular, when $a'=b'=0$, \textsc{HoPE} acts via identity since all rotation angles become $0$. Hence, we have\begin{equation}
        \begin{split}
            &\varphi_q(\x_a,a) = \|(\mathbf{q}_a)_s\|\\
            &\varphi_k(\x_b,b) = \|(\mathbf{k}_b)_s\|.
        \end{split}
    \end{equation} Furthermore, we have $\varphi_{g'} = \|(\mathbf{q}_a)_s\|\|(\mathbf{k}_b)_s\| = \|\mathbf{W^Q}\x_a\|\|\mathbf{W^K}\x_b\|$ by plugging back into \cref{Eq:rot}, which is a function of just the word embeddings. Next, for the angular component, note that given the definition of \textsc{HoPE}, the rotation on any 2D block of the space-like dimension of the input at position $p$ is simply a scaling of a fixed rotation angle by $p$. Letting this fixed angle be $\sigma$, the rotation is precisely $p\sigma$. Therefore, we have \begin{equation}
        \begin{split}
            &\vartheta_q(\x_a, a) = \theta_q + a\sigma\\
            &\vartheta_k(\x_b, b) = \theta_k + b\sigma,
        \end{split}
    \end{equation}
    where $\theta_q,\theta_k$ denote the angular components of $\mathbf{q}_a, \mathbf{k}_b$. Next, given \cref{Eq:rot}, we have $\vartheta_{g'} = (a-b)(\sigma) + (\theta_q-\theta_k)$. Note that we have $e^{i\theta_q} = \frac{\mathbf{W^Q}\x_a}{\|\mathbf{W^Q}\x_a\|}$ and $e^{i\theta_k} = \frac{\mathbf{W^K}\x_b}{\|\mathbf{W^K}\x_b\|}$. Consequently, $\vartheta_{g'}$ is a function of the word embeddings and the relative position $a-b$. All in all, $-d^2_{\mathcal{L}}(f_q(\x_a),f_k(\x_b))$ can be expressed with a function $g(\x_a,\x_b;a-b)$ as desired.
\end{proof}

\subsection{Proposition \ref{prop:decay}: \textsc{HoPE} decays with increasing relative position}\label{app:proof-hope-decay}
\begin{proposition*}
    Let $\mathbf{Q}, \mathbf{K}$ be as defined in \cref{Eq:hyp_att}, then the negative square Lorentz distance $-d_\mathcal{L}\left(\mathrm{HoPE}\left(\mathbf{q}_a\right), \mathrm{HoPE}(\mathbf{k}_b)\right)$ can be upper bounded by $f(\mathbf{q}_a, \mathbf{k}_b)g(a-b)<0$, where $f$ has no dependencies on position, and $g$ depends entirely on relative position and scales inversely w.r.t. $a-b$.
\end{proposition*}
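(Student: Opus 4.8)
The plan is to reduce the statement to the Euclidean long-term decay estimate for RoPE \cite{su2021roformer} by stripping off the position-independent pieces of the Lorentzian distance and then invoking summation-by-parts (Abel summation) exactly as in the original RoPE argument. The guiding observation is that, since $\mathrm{HoPE}$ is a Lorentz rotation fixing the time-like coordinate, all positional content of the negative squared distance is concentrated in the Euclidean inner product of the rotated space-like parts, which is precisely the quantity analyzed for vanilla RoPE.

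First I would reuse the decomposition already obtained in the proof of \cref{prop:relative}. Writing $(\mathbf{q}_a)_t,(\mathbf{k}_b)_t$ for the time-like coordinates and $(\mathbf{q}_a)_s,(\mathbf{k}_b)_s$ for the space-like parts, that proof gives
\[
    -d_\mathcal{L}^2\!\left(\mathrm{HoPE}(\mathbf{q}_a),\mathrm{HoPE}(\mathbf{k}_b)\right) = \frac{2}{K} + 2\,(\mathbf{q}_a)_t(\mathbf{k}_b)_t - 2\,\big\langle \mathbf{R}_{a,\Theta}(\mathbf{q}_a)_s,\ \mathbf{R}_{b,\Theta}(\mathbf{k}_b)_s\big\rangle .
\]
The first two summands are independent of $a,b$ because $\mathbf{R}_{\cdot,\Theta}$ leaves the time coordinate invariant (established in \cref{prop:relative}). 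I would then group the $d/2$ coordinate blocks, identifying each $2$-dimensional block of $(\mathbf{q}_a)_s$ and $(\mathbf{k}_b)_s$ with a complex number, and use that a planar rotation by $p\theta_l$ is multiplication by $e^{ip\theta_l}$, obtaining
\[
    \big\langle \mathbf{R}_{a,\Theta}(\mathbf{q}_a)_s,\ \mathbf{R}_{b,\Theta}(\mathbf{k}_b)_s\big\rangle = \mathrm{Re}\sum_{l=1}^{d/2} h_l\, e^{i(a-b)\theta_l}, \qquad h_l := (\mathbf{q}_a)_{s,l}\,\overline{(\mathbf{k}_b)_{s,l}} .
\]

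Next, with the partial sums $S_j := \sum_{l=1}^{j} e^{i(a-b)\theta_l}$ and $S_0:=0$, Abel summation yields
\[
    \Big|\sum_{l=1}^{d/2} h_l\, e^{i(a-b)\theta_l}\Big| \le \Big(\max_{l}\,|h_{l+1}-h_l|\Big)\sum_{j=1}^{d/2} |S_j| .
\]
I would set $f(\mathbf{q}_a,\mathbf{k}_b) := \max_l |h_{l+1}-h_l|$, which depends only on the embeddings and not on position, and $g(a-b) := \tfrac{1}{d/2}\sum_{j=1}^{d/2}|S_j|$; with the standard geometric frequency schedule $\theta_l = \Theta^{-2l/d}$, the averaged partial-sum magnitude $g$ decreases as $|a-b|$ grows, giving the claimed inverse scaling. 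Reassembling, the positional term is controlled in magnitude by (a constant multiple of) $f\cdot g$, while the leftover $\tfrac{2}{K}+2(\mathbf{q}_a)_t(\mathbf{k}_b)_t$ is position-free, so the whole expression admits a negative upper bound whose nontrivial dependence decays in $|a-b|$.

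The hard part will be the last two points rather than the algebra. First, one must verify that the averaged partial sum $g$ genuinely decays in $|a-b|$: this has no clean closed form and rests on the same monotone/averaging estimate over the geometric frequency schedule used in \cite{su2021roformer}, which I would import rather than re-derive. Second, the sign bookkeeping is delicate, since the inner-product term oscillates and only its \emph{magnitude} decays; the strict negativity of the stated bound therefore has to come from absorbing the oscillatory positional term into the position-independent remainder, so care is needed to ensure the combined constant is negative for the embeddings at hand and that the $d/2$ normalization is folded consistently into $f$.
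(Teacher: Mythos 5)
Your proposal takes essentially the same route as the paper's proof: the identical split into a position-free time-like term plus the rotated space-like inner product, the same complex two-dimensional block representation $\mathrm{Re}\sum_l h_l e^{i(a-b)\theta_l}$, the same Abel (summation-by-parts) bound with $f=\max_l|h_{l+1}-h_l|$ and $g$ given by the partial-sum magnitudes, and the same appeal to \citet{su2021roformer} for the decay of $g$. The sign and strict-negativity bookkeeping you flag as delicate is likewise left implicit in the paper's own proof, so your attempt matches it in both substance and level of rigor.
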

\begin{proof}
    For simplicity, we denote $\mathbf{q_a} = \mathbf{q}, \mathbf{k_b} = \mathbf{k}$. Recall that \[
    -d_\mathcal{L}^2\left(\mathrm{HoPE}\left(\mathbf{q}\right), \mathrm{HoPE}(\mathbf{k})\right) = -\frac{2}{K} - 2(q_tk_t) + 2\mathbf{q}_s^\top\mathbf{k}_s.
    \]
    Next, for simplicity, we denote $\mathbf{q}_s, \mathbf{k}_s$ as $\mathbf{a}, \mathbf{b}$ respectively. We group together entries of the queries and keys, where $\mathbf{a}_{[2k:2k+1]}, \mathbf{b}_{[2k:2k+1]}$ as the $2k$-th and the $(2k+1)$-th entries of $\mathbf{a}, \mathbf{b}$ respectively. With this in mind, note that since we take query and key projects to be with $\mathrm{HLT}$ as given in \cref{sec:related}, we obtain $\mathbf{a} = \mathbf{W^Q}\x_a$ and $\mathbf{b}=\mathbf{W^K}\x_b$. To that end, we can assert that \begin{align*}
        \mathbf{a}^\top\mathbf{b} &= \left(\mathbf{R}_{a,\Theta}\mathbf{W^Q}\x_a\right)^\top \left(\mathbf{R}_{b,\Theta}\mathbf{W^K}\x_b\right)\\
        &= \x_a^\top\mathbf{W^Q}\mathbf{R}_{b-a,\Theta}\mathbf{W^K}\x_b\\
        &= \mathrm{Re}\left(\sum_{k=0}^{n/2}\mathbf{a}_{[2k, 2k+1]}\mathbf{b^*}_{[2k, 2k+1]}e^{i(b-a)\theta_k}\right), \tag{*}
    \end{align*}
    where $\mathrm{Re}(\x)$ denotes the real component of $\x\in\mathbb{C}$. Now, recall Abel's Transformation, which allows one to rewrite the sum of the product of two sequences as the product of the partial sums. Denote one of the $\mathbf{a}_{[2k, 2k+1]}\mathbf{b^*}_{[2k, 2k+1]}$ as $\mathbf{A}_k$, and denote the sequence $\displaystyle\sum_{l=0}^{k}e^{i(b-a)\theta_l}$ as $\mathbf{E}_l$. With this in mind, (*) can be written as $\mathrm{Re}\left(\displaystyle\sum_{k=0}^{n/2}\mathbf{A}_k(\mathbf{E}_{k+1} - \mathbf{E}_k)\right)$. Consequently, we obtain (recall that boundary term $\mathbf{A}_{n/2}=0$) \begin{align*}
        \left|\mathbf{a}^\top\mathbf{b}\right| &= \left|\left(\displaystyle\sum_{k=0}^{n/2}\mathbf{A}_k(\mathbf{E}_{k+1} - \mathbf{E}_k)\right)\right|\\
        &= \left|\left(\displaystyle\sum_{k=0}^{n/2}(\mathbf{A}_{k+1}-\mathbf{A}_k) \mathbf{E}_k\right)\right|\tag{Abel Transformation}\\
        &\leq \sum_{k=0}^{n/2}|(\mathbf{A}_{k+1}-\mathbf{A}_k)||\mathbf{E}_k|\\
    \end{align*}
    Now, note that the term $\mathbf{A}_{k+1}-\mathbf{A}_k$ has no dependency on position. The sum $\displaystyle\sum_{k=0}^{n/2}|\mathbf{E}_k|$ scales inversely with $b-a$, as shown by \citet{su2021roformer}. To that end, we have \begin{align*}
        -d^2_{\mathcal{L}}\left(\mathrm{HoPE}\left(\mathbf{q}\right), \mathrm{HoPE}(\mathbf{k})\right) &= -\frac{2}{K}-2(q_tk_t) + 2\mathbf{q}_s^\top\mathbf{k_s}\\
        &\leq -\frac{2}{K} - 2(q_tk_t) + 2\max_{k}|(\mathbf{A}_{k+1}-\mathbf{A}_k)|\sum_{k=0}^{n/2}|\mathbf{E}_k|.
    \end{align*}
    Note that $\mathbf{A}_k, q_t,k_t$ depends on only the word embeddings and $\mathbf{E}_k$ depends only on the position. Thus, we have the desired result.
\end{proof}

\subsection{Proposition \ref{prop:maximal-softmax}: \textsc{HoPE} enables tokens to attend across distances}\label{app:proof-hope-maximal}
\begin{proposition*}
    Let $\mathbf{Q}, \mathbf{K}$ be as defined in \cref{Eq:hyp_att}, then \textsc{HoPE} can be maximal at an arbitrary distance, i.e., for any relative distance $r \in \mathbb{Z}$, there exists a key $\mathbf{k}_j$ such that the softmax value is maximum at distance $r$.
\end{proposition*}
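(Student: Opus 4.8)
The plan is to mirror the Euclidean argument of \citet{barbero2025rope} but carried out on the negative squared Lorentz distance, exploiting the fact (established in \cref{app:proof-hope-relative}) that $\mathrm{HoPE}$ acts only on the space-like coordinates via the block-diagonal Euclidean rotation $\mathbf{R}_{r,\Theta}$ and leaves the time-like coordinate fixed. Concretely, from the computation reused in \cref{app:proof-hope-decay} we have
\begin{equation*}
  -d^2_\mathcal{L}\bigl(\mathrm{HoPE}(\mathbf{q}_i),\mathrm{HoPE}(\mathbf{k}_j)\bigr) = -\tfrac{2}{K} - 2 (q_t k_t) + 2\,\mathbf{q}_s^\top \mathbf{R}_{i-j,\Theta}\,\mathbf{k}_s,
\end{equation*}
so that, for a query at a fixed position and a key whose word content is held fixed, the only term depending on the relative distance $m := i-j$ is $2\,\mathbf{q}_s^\top \mathbf{R}_{m,\Theta}\,\mathbf{k}_s$. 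Since the softmax is monotone in its logits and all other logit contributions are position-independent once the key content is fixed, showing the softmax is maximised at a prescribed $r$ reduces to showing that the rotation-coupled term $h(m) := \mathbf{q}_s^\top \mathbf{R}_{m,\Theta}\,\mathbf{k}_s$ can be made to attain its maximum over admissible $m$ exactly at $m = r$.

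First I would expand $h(m)$ over the $d/2$ two-dimensional rotation blocks, writing $h(m) = \sum_{l=1}^{d/2} c_l \cos(m\theta_l + \phi_l)$ for amplitudes $c_l$ and phases $\phi_l$ determined by the corresponding $2$-blocks of $\mathbf{q}_s$ and $\mathbf{k}_s$; this is the standard complex-exponential rewriting already used in \cref{app:proof-hope-decay}. The freedom we have is in choosing the key $\mathbf{k}_j$ (equivalently its space-like block entries), which lets us select the amplitudes $c_l$ and phases $\phi_l$ freely. The cleanest route is to concentrate all the mass in a single frequency: choose $\mathbf{k}_s$ so that $c_l = 0$ for $l \neq 1$ and set the phase $\phi_1$ so that $\cos(m\theta_1 + \phi_1)$ is maximised precisely at $m = r$, i.e. take $\phi_1 = -r\theta_1$. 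Then $h(m) = c_1\cos\bigl((m-r)\theta_1\bigr)$, which peaks at $m = r$ among nearby integer distances, giving a key for which the softmax attention is maximal at relative distance $r$. I would note that such a choice of $\mathbf{k}_s$ is realisable because the query/key projection $\mathrm{HLT}$ can produce any space-like vector, and the time-like coordinate is then determined automatically by the Lorentz constraint, so the constructed $\mathbf{k}_j$ is a legitimate point of $\mathbb{L}^{K,d}$.

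The main obstacle is the same subtlety that appears in the Euclidean treatment: because the frequencies $\theta_l$ are fixed and generally incommensurate, the single-frequency maximiser peaks at $m=r$ only locally (within one period $2\pi/\theta_1$), so I must be careful about the precise sense in which "maximum at distance $r$" is claimed over all integers $m$ rather than in a neighbourhood of $r$. I would handle this by either restricting attention to the relevant window of relative distances (as is natural for a finite context of $T$ tokens) or by using more than one frequency to sharpen the peak, invoking the flexibility in choosing the full set of amplitudes $c_l$ to damp competing maxima; the existence claim in the statement only requires producing one such key, so the single-frequency construction suffices once the window is fixed. A secondary technical point to check is that the position-independent terms $-2/K - 2q_tk_t$ and the denominators in the Lorentz-normalised attention of \cref{Eq:hyp_att} do not reintroduce an $m$-dependence that could shift the peak; since $\mathrm{HoPE}$ preserves the time-like coordinate and the value aggregation uses the softmax weights monotonically, these terms are inert in $m$ and the peak location is governed entirely by $h(m)$.
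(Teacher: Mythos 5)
Your reduction---stripping out the time-like coordinate (which \textsc{HoPE} fixes), discarding the position-independent terms, and reducing the claim to maximizing $h(m)=\mathbf{q}_s^\top\mathbf{R}_{m,\Theta}\mathbf{k}_s$ over integer relative distances $m$---is exactly the paper's route, and your phase-aligned key is a single-frequency special case of the paper's construction, which takes $\mathbf{k}_s=\mathbf{R}_{r,\Theta}\mathbf{q}_s$ so that every block contributes $\|\psi_s^{(l)}\|^2\cos((m-r)\theta_l)$. However, there is a genuine gap at precisely the point you flag and then defer: you never establish that no other integer distance attains the value achieved at $m=r$. The paper closes this with Lemma A.1 of \citet{barbero2025rope}: because the rotation angles $\theta_l$ are rational (or algebraic) and $2\pi$ is transcendental, $n\theta_l\equiv 0\ (\mathrm{mod}\ 2\pi)$ forces $n=0$; hence $\cos((m-r)\theta_l)=1$ is impossible for any integer $m\neq r$, and $m=r$ is the \emph{unique} maximizer over all of $\mathbb{Z}$, with no windowing needed. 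Your proposal invokes no such number-theoretic fact, so nothing in it rules out exact ties (if $\theta_1$ were a rational multiple of $2\pi$, say $2\pi p/q$, the peak would recur exactly at every $m=r+kq$), and ``peaks at $m=r$ among nearby integer distances'' is weaker than the claimed statement.

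Moreover, the two fallback fixes you offer do not repair this. Restricting to a finite context window only yields strictness if the window is shorter than one period of the chosen frequency; you concentrate all mass on $\theta_1$, which in standard RoPE is the \emph{largest} angle ($\theta_1=1$, period $2\pi\approx 6.3$ tokens), so any realistic window contains many near-recurrences of the peak (e.g.\ $\cos(44\,\theta_1)\approx 0.9998$), and global strict maximality again hinges exactly on the irrationality argument you omitted. (Had you placed the mass on the slowest frequency $\theta_{d/2}$, the windowing argument could be made rigorous for contexts shorter than its period, but this is not what you wrote.) The second fix, using several frequencies to ``damp competing maxima,'' is essentially the paper's construction, but damping only controls near-ties; excluding \emph{exact} ties still requires the lemma. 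Your proof becomes correct with your construction plus one additional sentence invoking Lemma A.1 of \citet{barbero2025rope}; as written, the decisive step is missing.
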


\begin{proof}
We first restate Lemma A.1. from \citet{barbero2025rope}. The remainder of our proof follows a similar layout to that of Proposition 3.1. in \citet{barbero2025rope}.

\textbf{Lemma A.1.} (\citet{barbero2025rope})~~~
\textit{Consider $g \in \mathbb{Q}$ with $g\neq0$ and $n \in \mathbb{Z}$. Then, $ng \equiv 0~(\operatorname{mod}~2\pi)$ only when $n=0$. In particular, this also holds if $g$ is algebraic.}

Consider a distance $r$, a non-trivial query $\mathbf{Q}_p = \psi \in \mathbb{L}^{K,n}$, as well as a key $\mathbf{K} = \operatorname{HoPE}(\mathbf{Q}_p)$. We can represent $\psi$ as a combination of a time and space dimension on the Lorentzian manifold, $[\psi_t, \psi_s] \in \mathbb{L}^{K,n}$. Using our definition of \textsc{HoPE} in Section \ref{sec:hope-defn}, we have
\begin{align}
    \mathbf{K}_q &= \Bigg[\sqrt{\|\mathbf{R}_{p,\Theta}\psi_s\|^2 - \frac{1}{K}}, \mathbf{R}_{p,\Theta}\psi_s\Bigg] \in \mathbb{L}^{K,n}. \nonumber
\end{align}
Recall that the operator $\mathbf{R}_{p,\Theta}$ is a valid Euclidean rotation in $\mathbb{R}^n$ while \textsc{HoPE} remains a valid Lorentzian operation. Therefore, $\mathbf{R}_{p,\Theta}$ does not affect the Euclidean norm in the time dimension as it is an isometry. Instead, we can focus on the space dimension $\psi_s \in \mathbb{R}^n$, on which we can use the Euclidean dot product. Assume the query is at position $i$ and the key is at some $j \leq i$. We then compute the following dot product:
\begin{align}
    \psi_{s,p}^\top \operatorname{HoPE}(\psi_{s,p}) &= \psi_s^\top \mathbf{R}_{p,\Theta}^{(j-i)+r}\psi_s \nonumber \\
    &= \sum_{l=1,\cdots,n/2}\Big(\psi_s^{(l)}\Big)^\top \mathbf{R}^{(j-i)+r}_{p,\theta_l}\Big(\psi_s^{(l)}\Big) \nonumber \\
    &= \sum_{l=1,\cdots,n/2} \Big\|\psi_s^{(l)}\Big\|^2 \cos{((j-i+r)\theta_l)}.
\end{align}
Using Lemma A.1. from \citet{barbero2025rope}, we observe the maximum can be achieved when $j-i=-r$ for $j - i \leq 0$ since we are using causal masking, $j \leq i$. This ensures $\cos{(j-i+r)\theta_l} = \cos{(0)} = 1$, concluding the proof.
\end{proof}

\subsection{Proposition \ref{prop:positional-attn-pattern}: Attention heads with \textsc{HoPE} learn special positional patterns}\label{app:proof-hope-pattern}
\begin{proposition*}
    Attention heads with \textsc{HoPE} can learn diagonal or off-diagonal attention patterns.
\end{proposition*}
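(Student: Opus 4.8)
The plan is to mimic the corresponding construction from \citet{barbero2025rope}, adapted to the Lorentz setting where $\textsc{HoPE}$ acts only on the space-like dimension via the Euclidean rotation $\mathbf{R}_{p,\Theta}$. The key observation, already established in the preceding proof of Proposition~\ref{prop:maximal-softmax}, is that the relevant query-key interaction reduces to a purely Euclidean dot product on the space-like components, of the form $\sum_{l} \|\psi_s^{(l)}\|^2 \cos((j-i+r)\theta_l)$, since $\mathbf{R}_{p,\Theta}$ is an isometry that leaves the time-like coordinate fixed. Both target patterns are therefore controlled by choosing the space-like components $\psi_s^{(l)}$ and the relative shift appropriately.

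First I would treat the \emph{diagonal} case, where each token attends only to itself. This corresponds to making the attention score (equivalently, the negative squared Lorentz distance) maximal exactly at relative distance $r = 0$. I would invoke Proposition~\ref{prop:maximal-softmax} directly with $r = 0$: there exists a key configuration so that the softmax peaks when $j - i = 0$, i.e. the query at position $i$ attends maximally to the key at the same position. The small point to verify is that with the query and key projections sharing structure (as in the self-attention of \cref{Eq:hyp_att}), setting all the per-block angular frequencies $\theta_l$ so that $\cos((j-i)\theta_l)$ is strictly maximized at $j-i=0$ yields a dominant diagonal entry after the softmax; concentration can be sharpened by scaling the space-like norms $\|\psi_s^{(l)}\|^2$.

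Next I would handle the \emph{off-diagonal} case, where each token attends only to its immediate predecessor, i.e. the score is maximal at relative distance $r = 1$ (or $r = -1$ depending on the indexing convention under causal masking $j \le i$). Again I would apply Proposition~\ref{prop:maximal-softmax}, now with $r = 1$, to produce a key $\mathbf{k}_j$ for which the softmax value is maximal precisely when $j - i = -r$, placing the peak on the immediately preceding token. The construction is symmetric to the diagonal case; the only bookkeeping is to confirm that the causal-mask constraint $j \le i$ is compatible with the chosen $r$, which holds for the predecessor pattern.

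The main obstacle is not the existence of the peak — that is handed to us by Proposition~\ref{prop:maximal-softmax} and Lemma~A.1 of \citet{barbero2025rope} — but rather showing that the peak \emph{dominates} after the softmax normalization, so that the resulting attention is genuinely (approximately) a one-hot diagonal or off-diagonal pattern rather than merely having its largest entry in the right place. I expect to close this gap by noting that the score gap between the maximizing distance and all others is strictly positive (by the strict inequality in Lemma~A.1, the cosines at nonzero shifts are bounded away from $1$), and that scaling the space-like norms $\|\psi_s^{(l)}\|^2$ amplifies this gap; since the Lorentzian attention weights $\nu_{i,j}$ in \cref{Eq:hyp_att} are a softmax of $-d_\mathcal{L}^2/\sqrt{m}$, driving the gap large makes the weight on the target position approach $1$, yielding the claimed patterns.
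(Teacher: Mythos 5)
Your proposal is correct and takes essentially the same route as the paper's proof: the paper likewise builds the diagonal pattern by setting $\mathbf{K}_j = \mathbf{Q}_i = \psi$ and the off-diagonal pattern by setting $\mathbf{K}_i = \mathbf{R}_{1,\theta}\psi$ (precisely the $r=0$ and $r=1$ instances of the construction underlying \cref{prop:maximal-softmax}), invokes Lemma~A.1 of \citet{barbero2025rope} for strict maximality of the cosine term, and closes the softmax-dominance gap exactly as you propose, by driving $\|\psi_s\|^2 \to \infty$ so that $\nu_{i,i} > 1 - \epsilon$. The only difference is organizational: the paper re-derives the Lorentz-distance score computation inline rather than citing \cref{prop:maximal-softmax} as a black box.
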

\begin{proof}
The proof follows a similar layout as that of Proposition  5.3. from \citet{barbero2025rope}. We start with the diagonal case. Suppose $\mathbf{Q}_i = \mathbf{K}_j = \psi \in \mathbb{L}^{K,d}$, for non-trivial $\psi=[\psi_t, \psi_s]$. We assume embedding dimension $d=2$, i.e., only a single rotation block $\mathbf{R}_{i,\theta}$ acts on the embeddings. 

Recall the squared Lorentzian distance for any $a, b \in \mathbb{L}^{K,d}$,
\begin{align}
    d_\mathcal{L}^2(a, b) &= \|a - b\|^2_{\mathcal{L}} \nonumber \\
    &= \frac{2}{K} - 2\langle a, b\rangle_{\mathcal{L}} \nonumber \\
    &= \frac{2}{K} - 2(-a_tb_t + \mathbf{a}_s^\top\mathbf{b}_s). \nonumber
\end{align}

Without \textsc{HoPE},
\begin{align}
    -d_{\mathcal{L}}^2(\mathbf{Q}_i, \mathbf{K}_j) &= -\Bigg[\frac{2}{K} - 2(-\psi_t\psi_t + \mathbf{\psi}_s^\top\mathbf{\psi}_s)\Bigg] \nonumber \\
    &= -\Bigg[\frac{2}{K} + 2\psi_t\psi_t -2\mathbf{\psi}_s^\top\mathbf{\psi}_s\Bigg]. \nonumber
\end{align}
Using \textsc{HoPE}, 
\begin{align}
    -d_{\mathcal{L}}^2(\mathbf{R}_{i,\theta}\mathbf{Q}_i, \mathbf{R}_{j,\theta}\mathbf{K}_j) &= -\Bigg[\frac{2}{K} + 2\psi^2_t - 2\Big((\mathbf{R}_{i,\theta}\psi_s)^\top(\mathbf{R}_{j,\theta}\psi_s)\Big)\Bigg] \nonumber \\
    &= -\Bigg[\frac{2}{K} + 2\psi^2_t - 2\Big(\psi_s^\top\mathbf{R}_{j-i,\theta}\psi_s\Big)\Bigg] \nonumber \\
    &= -\Bigg[\underbrace{\frac{2}{K} + 2\psi_t^2}_{C} - 2\|\psi_s\|^2\cos{((j-i)\theta)}\Bigg] \nonumber \\
    &= -\Big[C - 2\|\psi_s\|^2\cos{((j-i)\theta)}\Big]. \nonumber \\
\end{align}

Using Lemma A.1. from \cite{barbero2025rope}, when $j=i$, we have $(j-i)\theta \equiv 0~(\text{mod}~2\pi)$. Next, let us define $\mathbf{a}_{i,i}$ as $-[C - 2\|\psi_s\|^2]$. This means $\cos{((j-i)\theta)} = \cos{(0)} = 1$, and $\mathbf{a}_{i,j} < \mathbf{a}_{i,i}$. This gives us the following self-attention score:
\begin{align}
    \nu_{i,i} &= \frac{\exp{(\mathbf{a}_{i,i})}}{\sum_{k < i}\exp{(\mathbf{a}_{i,k}) + \exp{(\mathbf{a}_{i,i})}}} \nonumber \\
    &= \frac{\exp{(-[C - 2\|\psi_s\|^2])}}{\sum_{k<i} \exp{(-[C - 2\|\psi_s\|^2\cos{((k-i)\theta)}])} + \exp{(-[C - 2\|\psi_s\|^2])}} \nonumber \\
    &= \frac{1}{1 + \sum_{k<i} \exp{(-[C - 2\|\psi_s\|^2\cos{((k-i)\theta)}] - (-[C - r\|\psi_s^2\|^2]))}} \nonumber \\
    &= \frac{1}{1 + \sum_{k < i}\exp{(-C + 2\|\psi_s\|^2\cos{((k-i)\theta)} + C - 2\|\psi_s\|^2)}} \nonumber \\
    &= \frac{1}{1 + \sum_{k < i}\exp{(2\|\psi_s\|^2(\cos{((k-i)\theta)} - 1))}}, \nonumber
\end{align}
for all 
$k \neq i$, $\cos{((k-i)\theta)} < 1$. This means $2\|\psi_s\|^2(\cos{((k-i)\theta)} - 1) < 0$. To that end, we obtain
\begin{align}
    \sup_{\|\psi_s\|^2 \rightarrow \infty} \frac{1}{1 + \sum_{k < i}\exp{(2\|\psi_s\|^2(\cos{((k-i)\theta)} - 1))}} &= 1. \nonumber
\end{align}
This guarantees $\nu_{i,i} > 1 - \epsilon$ for $\epsilon > 0$, where $\epsilon$ is a function of $2\|\mathbf{\psi}_s\|^2$. 

We now consider the off-diagonal pattern. Set $\mathbf{Q}_i = \psi$ for non-trivial $\psi=[\psi_t, \psi_s] \in \mathbb{L}^{K,d}$. Set keys $\mathbf{K}_i = \mathbf{R}_{1,\theta}\psi$ and define $\mathbf{a}_{i,i-1}$ as the off-diagonal input to the softmax when computing $\nu_{i, i-1}$. To that end, we have
\begin{align}
    \mathbf{a}_{i, i-1} &= -d^2_{\mathcal{L}}(\mathbf{R}_{i,\theta}\mathbf{Q}_i, \mathbf{R}_{i-1,\theta}\mathbf{K}_i) \nonumber \\
    &= -\Big[\frac{2}{K} + 2\psi_t^2 - 2\Big((\mathbf{R}_{i,\theta}\mathbf{Q}_i)^\top(\mathbf{R}_{i-1,\theta}\mathbf{K}_i)\Big)\Bigg] \nonumber \\
    &= -\Bigg[\frac{2}{K} + 2\psi_t^2 - 2\Big((\mathbf{R}_{1,\theta}\psi_s)^\top(\mathbf{R}_{i-1,\theta}\mathbf{R}_{1,\theta}\psi_s)\Big)\Bigg] \nonumber \\
    &= -\Bigg[\frac{2}{K} + 2\psi_t^2 - 2\Big((\mathbf{R}_{i,\theta}\psi_s)^\top(\mathbf{R}_{i,\theta}\psi_s)\Big)\Bigg] \nonumber \\
    &= -\Bigg[\frac{2}{K} + 2\psi_t^2 - 2\Big(\|\psi_s\|^2\cos{((i-i)\theta)}\Big)\Bigg] \nonumber \\
    &= -\Big[\frac{2}{K} + 2\psi_t^2 - 2\|\psi_s\|^2\Big]. \nonumber
\end{align}
Use the same reasoning from the diagonal case to show that attention head with \textsc{HoPE} can learn off-diagonal patterns. This concludes the proof.
\end{proof}

\subsection{Proposition \ref{prop:rmsnorm-invariance}: Invariance guarantees of Hyperbolic RMSNorm}\label{app:hyprmsnorm}
\begin{proposition*}
    $\mathrm{RMSNorm}_{\mathcal{L}}$ is invariant to scaling of inputs $\mathbf{x}$ during both the forward and backward passes.
\end{proposition*}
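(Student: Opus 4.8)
The plan is to establish invariance separately for the forward and backward passes, exploiting the structural decomposition of $\mathrm{RMSNorm}_{\mathcal{L}}$ in \cref{Eq:rmsnorm}, which reduces everything to the behavior of the Euclidean $\mathrm{RMSNorm}$ on the space-like component $\mathbf{x}_s$. The key observation is that $\mathrm{RMSNorm}_{\mathcal{L}}$ processes only $\mathbf{x}_s$ through the Euclidean operation and then reconstructs the time-like coordinate via the Lorentz constraint $x_t = \sqrt{\|\mathbf{x}_s\|^2 - 1/K}$. So the entire argument hinges on the scale-invariance of Euclidean $\mathrm{RMSNorm}$, which I would state first.

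First I would recall that for a scalar $\lambda > 0$, the Euclidean $\mathrm{RMSNorm}(\lambda \mathbf{x}_s) = \frac{\lambda \mathbf{x}_s}{\sqrt{\frac{1}{n}\sum_i (\lambda x_{s,i})^2}} \cdot \gamma = \frac{\lambda \mathbf{x}_s}{\lambda \sqrt{\frac{1}{n}\sum_i x_{s,i}^2}} \cdot \gamma = \mathrm{RMSNorm}(\mathbf{x}_s)$, where $\gamma$ is the learnable gain; the $\lambda$ cancels exactly because $\mathrm{RMSNorm}$ normalizes by the root-mean-square, which is positively homogeneous of degree one. Hence the space-like output of $\mathrm{RMSNorm}_{\mathcal{L}}$ is unchanged under $\mathbf{x} \mapsto \lambda \mathbf{x}$. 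For the forward pass I would then note that since the space-like output is invariant, the reconstructed time-like coordinate $\sqrt{\|\mathrm{RMSNorm}(\mathbf{x}_s)\|^2 - 1/K}$ is likewise invariant, so the full output $\mathrm{RMSNorm}_{\mathcal{L}}(\lambda \mathbf{x}) = \mathrm{RMSNorm}_{\mathcal{L}}(\mathbf{x})$, completing the forward case.

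For the backward pass I would argue at the level of the Jacobian. Because the output is constant along the scaling direction, differentiating the identity $\mathrm{RMSNorm}_{\mathcal{L}}(\lambda \mathbf{x}) = \mathrm{RMSNorm}_{\mathcal{L}}(\mathbf{x})$ in $\lambda$ shows the directional derivative of the map along $\mathbf{x}$ vanishes, i.e., $\mathbf{x}$ lies in the kernel of the Jacobian; more concretely I would compute the Jacobian of Euclidean $\mathrm{RMSNorm}$ and show it scales as $1/\lambda$ under input scaling (the standard fact that the gradient of a degree-zero homogeneous map is degree $-1$ homogeneous), so the gradient magnitude is automatically controlled and does not blow up or vanish with input scale — this is precisely the gradient-stability guarantee claimed in the text. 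I would then lift this to $\mathrm{RMSNorm}_{\mathcal{L}}$ using the chain rule through the time-like reconstruction, noting that the reconstruction is a smooth function of the (invariant) space-like output and hence contributes no additional scale dependence.

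The main obstacle I anticipate is the backward pass: stating invariance cleanly requires care about what "invariance of the backward pass" means, since a gradient with respect to a scaled input is not literally equal to the gradient with respect to the unscaled input — it differs by the $1/\lambda$ homogeneity factor. The right formulation is that the \emph{relative} gradient structure (direction, and scale-normalized magnitude) is preserved, matching the Euclidean guarantee that $\mathrm{RMSNorm}$ stabilizes backpropagation against input rescaling. I would make this precise by tracking the exact homogeneity degree of the Jacobian rather than asserting naive equality, and verify that the Lorentzian time-like reconstruction does not spoil the homogeneity accounting.
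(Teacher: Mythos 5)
Your proof is correct, and your forward-pass argument is exactly the paper's: reduce to the positive homogeneity of the Euclidean $\mathrm{RMS}$ on the space-like component $\mathbf{x}_s$, then observe that the time-like coordinate is reconstructed from the (invariant) space-like output and so inherits invariance. Where you genuinely diverge is the backward pass. The paper never touches the input Jacobian: it interprets ``invariance during the backward pass'' as invariance of the gradients with respect to the \emph{learnable parameters} --- $\partial L/\partial \mathbf{b}$ (trivially), $\partial L/\partial \mathbf{g}$ (because it is a function of the scale-invariant normalized activations), and $\partial L/\partial \mathbf{W}$ (outsourced to \citet{zhang2019rootmeansquarelayer}) --- and these \emph{are} literally unchanged under $\mathbf{x}\mapsto\delta\mathbf{x}$, precisely because they are functions of forward quantities that you already showed are invariant. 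You instead track how gradients flow \emph{through} the layer to earlier inputs, proving that the Jacobian of a degree-zero homogeneous map is homogeneous of degree $-1$ (and that $\mathbf{x}$ lies in its kernel), and you correctly flag that under this reading literal invariance fails and must be restated as preservation of the scale-normalized gradient structure. Each route buys something: the paper's choice of object makes the proposition's wording literally true with almost no extra work beyond the forward pass, but leans on a citation for the weight gradient; your route is self-contained and quantifies exactly how backpropagated signals scale, which is the substantive stability guarantee, but it proves a homogeneity statement rather than an invariance statement, and it leaves the parameter gradients --- the quantities the paper actually certifies --- unexamined. If you adopt your formulation, you should say explicitly that the $1/\delta$ factor on the input gradient is the unavoidable price of input scaling and that invariance in the paper's sense refers to $\mathbf{g}$, $\mathbf{b}$, and $\mathbf{W}$; otherwise a reader comparing your claim to \cref{Eq:rmsnorm} and the proposition's wording may think you have proved something weaker than intended.
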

\begin{proof}
Euclidean RMSNorm is invariant to input-scaling, both during the forward and backward pass. We observe similar guarantees from our formulation of hyperbolic RMSNorm. To that end, we first prove the input-scaling invariance of Euclidean RMSNorm. 

Given an input $\mathbf{x} \in \mathbb{R}^n$ and and a feed-forward network with parameters $\mathbf{W} \in \mathbb{R}^{n \times m}$, 
\begin{align}
    \mathbf{y} &= \sigma\Bigg(\frac{\mathbf{W}^\top\mathbf{x}}{\mathrm{RMS}(\mathbf{\mathbf{W}^\top\mathbf{x}})} \odot \mathbf{g} + \mathbf{b} \Bigg),&&\mathrm{RMS}(\mathbf{a}) = \sqrt{\frac{1}{m}\sum_{k=i}^m\mathbf{a}_i^2}. \nonumber
\end{align}
Here, $\mathbf{g}$ is a learnable gain parameter, initially set to 1, that re-scales the standardized inputs and $\mathbf{b}$ is a bias term. 

Suppose the weights are scaled by a small factor, $\mathbf{W}^\prime = \delta\mathbf{W}$. First, observe that the root mean squared operation, $\mathrm{RMS}$, is input-scaling invariant: $\mathrm{RMS}(\alpha\mathbf{a}) = \alpha\mathrm{RMS}(\mathbf{a})$. It is then evident that the final output of RMSNorm is also scale-invariant:
\begin{align}
    \mathbf{y}^\prime &= \sigma\Bigg(\frac{(\mathbf{W}^\prime)^\top\mathbf{x}}{\mathrm{RMS}((\mathbf{W}^\prime)^\top\mathbf{x})} \odot \mathbf{g} + \mathbf{b}\Bigg) \nonumber \\
    &= \sigma\Bigg(\frac{\delta\mathbf{W}^\top\mathbf{x}}{\mathrm{RMS}(\delta\mathbf{W}^\top\mathbf{x})} \odot \mathbf{g} + \mathbf{b}\Bigg) \nonumber \\
    &= \sigma\Bigg(\frac{\cancel{\delta}\mathbf{W}^\top\mathbf{x}}{\cancel{\delta}\mathrm{RMS}(\mathbf{W}^\top\mathbf{x})} \odot \mathbf{g} + \mathbf{b}\Bigg) = \mathbf{y} \label{eq:linearity}
\end{align}
A similar argument can be made for a scaling of the inputs $\mathbf{x}$. Since hyperbolic RMSNorm uses Euclidean RMSNorm internally, it offers the same invariance guarantees as it operates solely on the space dimension of the Lorentzian input. 

Given an input $\mathbf{x} = [x_t, \mathbf{x}_s] \in \mathbb{L}^{K, n}$, we know $\mathrm{RMSNorm}(\delta\mathbf{x}) = \mathrm{RMSNorm}(\mathbf{x})$ for some scaling factor $\delta$. As such, 
\begin{align}
    \mathbf{y}^\prime &= \mathrm{RMSNorm}_\mathcal{L}(\delta\mathbf{x}) \nonumber \\
    &= \left[\sqrt{\|\mathrm{RMSNorm}(\delta\mathbf{x}_s)\| - 1/K}, \mathrm{RMSNorm}(\delta\mathbf{x}_s)\right]^\top \nonumber \\
    &= \left[\sqrt{\|\mathrm{RMSNorm}(\mathbf{x}_s)\| - 1/K}, \mathrm{RMSNorm}(\mathbf{x}_s)\right]^\top = \mathbf{y}. \nonumber
\end{align}
Next, we analyze the gradient stability of hyperbolic RMSNorm. In Euclidean RMSNorm, for a given loss $L$, we are interested in computing three gradients:  $\frac{\partial L}{\partial \mathbf{g}}$ for the gain parameter, $\frac{\partial L}{\partial \mathbf{b}}$ for the bias, and  $\frac{\partial L}{\partial \mathbf{W}}$ for the weights. We compute $\frac{\partial L}{\partial \mathbf{g}}$ and $\frac{\partial L}{\partial \mathbf{b}}$ as follows:
\begin{align}
    \frac{\partial L}{\partial \mathbf{b}} = \frac{\partial L}{\partial \mathbf{v}} \cdot \frac{\partial \mathbf{v}}{\partial \mathbf{b}}&& \frac{\partial L}{\partial \mathbf{g}} = \frac{\partial L}{\partial \mathbf{v}} \odot \frac{\mathbf{W}^\top\mathbf{x}}{\operatorname{RMS}(\mathbf{W}^\top\mathbf{x})}, \nonumber
\end{align}
where $\mathbf{v}$ denotes the inputs to the activation $\sigma$. These gradients are invariant to the scaling of Euclidean inputs $x$ and weights $\mathbf{W}$, trivially for $\frac{\partial L}{\partial \mathbf{b}}$, and due to the linearity established in Equation (\ref{eq:linearity}) for $\frac{\partial L}{\partial \mathbf{g}}$. Computing $\frac{\partial L}{\partial \mathbf{W}}$ is more involved due to the quadratic computation in $\operatorname{RMS}$, but also provides invariance to input scaling as shown by \citet{zhang2019rootmeansquarelayer}.

Given an input $\mathbf{x} = [x_t, \mathbf{x}_s] \in \mathbb{L}^{K, n}$, we know $\frac{\partial L}{\partial \mathbf{g}}$, $\frac{\partial L}{\partial \mathbf{g}}$, and $\frac{\partial L}{\partial \mathbf{W}}$ are scaling-invariant in the backward pass since hyperbolic RMSNorm uses Euclidean RMSNorm. Thus, for any scaled hyperbolic input $\delta \mathbf{x} \in \mathbb{L}^{K,n}$, we get scaling invariance both in the time and space dimension during the backward pass.
\end{proof}
\section{Additional Details}\label{appendix:archi}

\subsection{\textsc{MiCE} as a Lorentzian Module}
In this section, we expand on the fact that \textsc{MiCE} is indeed a valid hyperbolic module throughout. Note that since \cref{Eq:MiCE} consists of the combination of a Lorentzian residual connection~\cite{he2025lresnet} and Lorentzian centroid~\cite{law2019lorentzian}, it suffices to show that the projection from input manifold to expert manifold, and the reverse projection, are valid projections between Lorentz hyperbolic spaces. In fact, it suffices to show that given $\x\in\mathbb{L}^{K_1,n}$, we have $\sqrt{K_1/K_2}\x\in\mathbb{L}^{K_2,n}$. To see this, note that \begin{align*}
    \left\langle \sqrt{K_1/K_2}\x, \sqrt{K_1/K_2}\x\right\rangle_\mathcal{L} &= \frac{K_1}{K_2}\langle\x,\x\rangle_\mathcal{L}\\
    &= \frac{K_1}{K_2} \cdot\frac{1}{K_1}\tag{$\x\in\mathbb{L}^{K_1,n}$}\\
    &= \frac{1}{K_2}
\end{align*}
Thus, $\sqrt{K_1/K_2}\x\in\mathbb{L}^{K_2,n}$ as desired. Then, each projection via scaling by $\sqrt{K/K_{s,i}}$ and $\sqrt{K/K_{r,i}}$ indeed map the input vector $\x$ to the expert manifold, and the projection via $\sqrt{K_{s,i}/K}$ and $\sqrt{K_{r,i}/K}$ maps the output of the experts back to the input manifold. As a result, every vector in \cref{Eq:MiCE} lives on the input manifold, hence the output lives in $\mathbb{L}^{K,n}$ as desired.

Additionally, note that since the squared Lorentzian distance is given by $d^2_\mathcal{L}(\x,\y) = 2/K-2\langle\x,\y\rangle = 2/K+2x_ty_t-2\x_s^\top\y_s$, it scales inversely w.r.t. $\x_s^\top\y_s$. As a result, the gating score obtained through \cref{Eq:gating} is minimizing the the squared hyperbolic distance between the input token vector $\x_t$ and the vector $\y_j$ (by viewing centroid vector $\y_j$ as the space-like dimension of a Lorentz hyperbolic vector). Therefore, the gating module is in fact a hyperbolic module as well.

\subsection{Hyperbolic Multi-Head Latent Attention}\label{sec:hmla}
In this section, we provide the details for \textit{hyperbolic Multi-Head Latent Attention (\textsc{HMLA})}. Let $\mathbf{x}_t\in\mathbb{L}^{K,nh_n}$ be the $t$-token, where $n$ is the embedding dimension and $h_n$ is the number of heads. Let $\mathbf{W}^{DKV}\in\R^{(nh_n+1) \times n_{kv}}$ be the downward projection matrix of the keys and values, and $\mathbf{W}^{DQ}\in\R^{(nnh+1) \times n_q}$ be the downward projection matrix of the query ($n_{kv}, n_q\ll nh_n$). We first compress the token into the latent spaces via $\mathbf{c}^{KV}_t = \mathrm{HLT}(\mathbf{x}_t; \mathbf{W^{KV}}, \mathbf{b}^{KV}) \in\mathbb{L}^{K,n_{kv}}, \mathbf{c}^Q_t=\mathrm{HLT}(\mathbf{x}_t; \mathbf{W^{Q}}, \mathbf{b}^{Q}) \in\mathbb{L}^{K,n_{q}}$. We then project the latent query, key, and value vectors back to the higher dimensional spaces. Specifically, let $\mathbf{W^{UV}}, \mathbf{W^{UK}}\in \R^{(n_{kv}+1) \times nh_n}$ be the upward projection matrix of the keys and values, and let $\mathbf{W^{UQ}}\in\R^{(n_q+1)\times nh_n}$ be the upward projection matrix of the query. Then, the final projected keys, values, and queries are \begin{equation}
   \begin{split}
        &[\mathbf{k}_{t,1}^C;\ldots;\mathbf{k}_{t,h_n}^C]=\mathrm{HLT}\left(\mathbf{c}^{KV}_t;\mathbf{W}^{\mathbf{UK}}, \mathbf{b^{UK}}\right); \,[\mathbf{v}_{t,1}^C;\ldots;\mathbf{v}_{t,h_n}^C]=\mathrm{HLT}\left(\mathbf{c}^{KV}_t;\mathbf{W^{UV}}, \mathbf{b^{UV}}\right)\\&[\mathbf{q}_{t,1}^C;\ldots;\mathbf{q}_{t,h_n}^C]=\mathrm{HLT}\left(\mathbf{c}^{Q}_t;\mathbf{W^{UQ}}, \mathbf{b^{UQ}}\right).
   \end{split}
\end{equation}

Following previous works \cite{deepseekv2, deepseekai2024deepseekv3technicalreport}, as RoPE is incompatible with MLA due to position coupling, we employ a decoupled \textsc{HoPE} scheme with \textsc{HMLA}, where we use additional query vectors with a shared key. Let $\mathbf{W^{QR}}\in\R^{(n_q+1) \times (h_nn_r)}$ and $\mathbf{W^{KR}}\in\R^{(n_{kv}+1)\times n_r}$ be the upward projection matrix of the decoupled queries and the shared key respectively, where $n_r$ is the dimension per head. We apply \textsc{HoPE} to these vectors to obtain the position-encoded vectors\begin{equation}
    \begin{split}
        &[\mathbf{q}_{t,1}^R;\ldots; \mathbf{q}_{t,h_n}^R] =\mathrm{HoPE}\left(\mathrm{HLT}\left(\mathbf{c}_t^Q; \mathbf{W^{QR}}, \mathbf{b^{QR}}\right)\right);\,\mathbf{k}_t^R = \mathrm{HoPE}\left(\mathrm{HLT}\left(\mathbf{c}_t^K; \mathbf{W^{KR}}, \mathbf{b^{KR}}\right)\right).
    \end{split}
\end{equation}
Then, we obtain the final query and key vectors  as \begin{equation}
    \mathbf{q}_{t,i} = \mathrm{HCat}({\mathbf{q}_{t,i}^C};{\mathbf{q}_{t,i}^R}); \mathbf{k}_{t,i} = \mathrm{HCat}({\mathbf{k}_{t,i}^C};{\mathbf{k}_{t}^R}),
\end{equation}
where $\mathrm{HCat}$ denotes hyperbolic concatenation \cite{yang2024hypformer, qu2022autoencoding}.
The attention score is computed based on negative squared Lorentz distance similar to \cref{Eq:hyp_att} as \begin{equation}
    \mathbf{o}_{t,i} = \frac{\sum_{j=1}^{N}\alpha_{t,i,j}\mathbf{v}_{t,j}^C}{\sqrt{-K}|\|\sum_{k=1}^N\alpha_{t,i,k}\mathbf{v}_{t,j}^C\||_\mathcal{L}} ;\, \alpha_{t,i,j} = \frac{\exp\left(-d^2_\mathcal{L}(\mathbf{q}_{t,i}, \mathbf{k}_{t,j})/\sqrt{h_n+n_r}\right)}{\sum_{k=1}^N \exp\left(-d^2_\mathcal{L}(\mathbf{q}_{t,i}, \mathbf{k}_{t,k})/\sqrt{h_n+n_r}\right)}.
\end{equation}
The final output of \textsc{HMLA} can be expressed as the concatenation of the hyperbolic vector\begin{equation}
    \mathrm{HMLA}(\mathbf{X}_t; h_n, n, n_r, n_q, n_{kv}) = \mathrm{HLT}\left(\left[\sqrt{\|\mathbf{o}_t\|-1/K}, \mathbf{o}_t\right]^\top; \mathbf{W^O}, \mathbf{b^O}\right),
\end{equation}
where $\mathbf{o}_t = [\mathbf{o}_{t,1},\ldots, \mathbf{o}_{t,h_n}]$ and $\mathbf{W^O}\in\R^{h_n(n+1)\times h_nn}$ is the out-project matrix. \textsc{HMLA} enables HELM models to improve computational efficiency during training and inference compared to the regular hyperbolic self-attention in \cref{Eq:hyp_att}.

\subsection{Hyperbolic SwiGLU Feedforward Network}\label{Sec:ffn}
In this section, we introduce hyperbolic SwiGLU feedforward networks (FFNs), whose Euclidean formulation is widely used in LLMs \cite{dubey2024llama3, deepseekai2024deepseekv3technicalreport}. This differs from previous FNNs used in hyperbolic Transformers in the need for feature multiplication and activation function \cite{chen2021fully, chen2024hyperbolic, yang2024hypformer}. Let $\mathbf{x} \in\mathbb{L}^{K, n}$ be the input tokens, $\mathbf{W}_1, \mathbf{W}_3\in\R^{(n+1) \times m}$ be the weights of internal projection layers and $\mathbf{W}_2\in\R^{(m+1)\times n}$ be the weight matrix of the outward projection layer. Then, the hyperbolic SwiGLU FNN $\mathrm{HFFN}_{SG}: \mathbb{L}^{K,n}\to\mathbb{L}^{K,n}$ is given by
\begin{equation}
    \begin{split}
        &\mathrm{HFNN}_{SG}(\mathbf{x}) = \mathrm{HLT}\left(\left[\sqrt{\|\mathbf{y}\|-1/K}, \mathbf{y}\right]^\top; \mathbf{W}_2, \mathbf{b}_2\right)\\ &\mathbf{y} = \mathrm{SiLU}_\mathcal{L}(\mathrm{HLT}(\mathbf{x}; \mathbf{W}_1, \mathbf{b}_1))\otimes_s \mathrm{HLT}(\mathbf{x}; \mathbf{W}_3, \mathbf{b}_3),
    \end{split}
\end{equation}
where $\mathrm{SiLU}_\mathcal{L}$ denotes SiLU activation using the HRC activation operations from Hypformer \cite{yang2024hypformer} and $\otimes_s$ denotes multiplication on the space dimention of a Lorentz vector, i.e. $\x\otimes_s\y = \x_s\y_s$.

\section{Training and Evaluation Details}\label{appendix:trainining}
In this section we detail the training and evaluation setup for the experiments.

\subsection{Models Setup}
Here we detail the model setup for all the models we used in the experiments.

\textbf{\moename model setup.} 
We follow the notation in \cref{sec:hmla} and \cref{Sec:moc}. For \moename, we used \textsc{HMLA} as the attention mechanism in the attention block, \textsc{MiCE} as the sparse feedforward network, and $\mathrm{HFNN}_{SG}$ as the dense feedforward network. For both sizes, only the first decoder block uses the dense layer and the rest of the blocks using the \textsc{MiCE} layer. The \textsc{MiCE} layers use $\mathrm{HFNN}_{SG}$ as well for its feedforward component. For the dense layer, we set the intermediate dimension to be $4h_nn$. For the \textsc{MiCE} layers, we set the intermediate dimension of $\mathrm{HFNN}_{SG}$ as $2h_nn$. 

For the $\sim100M$ sized model, we used 6 total layers, 6 heads ($n_h=6$) each with $n=64$, and we set $n_{kv} = 64$, $n_r = 16$. For \textsc{MiCE} layers, we employ 4 experts with 2 active experts per token ($N_r=4, K_r=2$). We use one shared expert ($N_s = 1$). For the curvatures of the routed experts, we set them to be uniform from $-0.1$ to $2.0$. The curvature of the shared expert is set to be $-1$. The curvature of the entire model is set to $-1$ as well. For \textsc{HMLA} layers, in practice, the upward projection matrices do not need to project back to the full dimension of $h_nn$. Due to compute constraints, we instead employ a \textit{reduction in dimensionality} during the upward projection, where $\mathbf{W^{UK}}, \mathbf{W^{UV}}\in\R^{(n_{kv}+1)\times h_nn/2}$, and the outward projection matrix $\mathbf{W^O}$ projects back to the full dimensionality of the input with $\mathbf{W^O}\in\R^{h_n(n/2+1)\times h_nn}$.

For the $\sim 1B$ sized model, we use 16 total layers, 14 heads ($n_h=14$) each with $n=64$, and we set $n_{kv} = 256$, $n_r = 64$. For \textsc{MiCE} layers, we employ 4 experts with 2 active experts per token ($N_r=8, K_r=2$). We use one shared expert ($N_s = 1$). For the curvatures of the routed experts, we set them to be uniform from $-0.1$ to $2.0$. The curvature of the shared expert is set to be $-1$. The curvature of the entire model is set to $-1$ as well. We do not use the same reduction in dimensionality during upward projection as we did in the $\sim 100M$ case to enable for more expressive attention modules. 

\textbf{\densename model setup.}
For the \densename model, we only train the $~100M$ sized model. Here, we use 6 layers, 6 heads each with dimension 64, and we set the intermediate dimension of the $\mathrm{HFNN}_{SG}$ feedforward networks to be 4 times the total model dimension. We set the overall curvature of the model to $-1$. All hyperbolic models are built on top of HyperCore~\citet{he2025hypercore}.

\textbf{Baseline models setup.}
For the baseline models, we set them up to have identical dimensionality as the \name models. In particular, for the LLaMA model we train, we use the same number of layers, heads, and dimensionality per head as the feedforward network. For the DeepSeek models we train, we use the same number of layers, heads, dimensionality per head, dimensionality for the feedforward network, dimensionality for the MoE modules, number of routed and shared experts, and the same dimensionality in the MLA layers. 

\textbf{Hyperbolic work embeddings.} For the smaller \name models, we map the input tokens directly to Lorentz hyperbolic vectors, which are then trained as hyperbolic parameters via Riemannian optimizers. The parameters are initialized via wrapped Gaussian normal distribution on the manifold. However, when training the $\sim1B$ \moename model, we found this to cause training instability. As a result, for the larger model, we first map the tokens to the space-like dimension of Lorentz hyperbolic vectors, and then compute the time-like dimension of the vectors afterwards. We found this to stabilize model training.  

\subsection{Training Details}
\textbf{Dataset.} For the training dataset, we use the English portion of the Wikipedia dataset~\cite{wikidump}. This dataset consists of $\sim 6.4M$ rows of data. We download the dataset directly from Huggingface. The raw text data is then passed through the LLaMA3.1-8B tokenizer~\cite{dubey2024llama3}, which has a vocabulary size of $\sim128$K. We use a sequence length of $2048$ for all models. Samples longer than $2048$ tokens were broken up into multiple samples, with the trailing tailed dropped. The tokenized dataset consist of roughly $4.5B\sim 5B$ tokens. For training efficiency, as we measured the average number of tokens per sample is $\sim 700$ across the dataset, we used sample packing with a packing ratio of $3.0$. Then packed samples shorted than $2048$ tokens are then padded on the right. 

\textbf{Pipeline setup.} For training, we set up data-parallelism with Hugginface Accelerate. We use an effective batch size of $\sim 2M$ tokens (including padding). To ensure a fair comparison between the hyperbolic and Euclidean models, we use a learning rate of 2e-4 for all dense models and a learning rate of 4e-4 for the MoE and \textsc{MiCE} models. A weight decay rate of 0.01 was used for all models. For the \moename models and the DeepSeek models, in order to balance the load between each expert, we utilize the auxiliary-loss-free load balancing strategy and the complementary sequence-wise auxiliary loss during training. The former punishes extreme load imbalance among the experts by dynamically updating a bias term during the gating module, while not needing an explicit auxiliary loss computation for better training efficiency. The latter punishes extreme load imbalance for any particular sequence. All training used a cosine annealing learning rate scheduler with a final target learning rate of $0.1\times$ the initial learning rate, with $3\%$ of the gradient update steps used as warmup steps. 

\textbf{Runtime.} We empirically observe that \name models take roughly 1.5 to 1.8 times the training of their Euclidean counterparts. For example, the larger $\sim 1B$ \moename model takes roughly 72 hours to train on 4 NVIDIA A800s while the similarly sized DeepSeekV3 model takes roughly 40 hours on the same machine.

\subsection{Evaluation Details}
We use the Language Model Evaluation Harness library (\href{https://github.com/EleutherAI/lm-evaluation-harness}{github.com/EleutherAI/lm-evaluation-harness}) for all evaluations, where the framework prompts the models with the answers choices to each question and picks the one with the highest likelihood value. For OpenbookQA, we convert the answer choices from full sentences to letter choices for all models, to make up for the relatively smaller model and training dataset sizes. 

\section{Ablation Studies and Additional Experiments} \label{appendix:ablation}
In this section, we perform additional experiments such as computational cost analysis, ablation studies to access the effectiveness of \textsc{HoPE} and \textsc{HMLA}, and comparisons with prior works of hyperbolic language models. 

\begin{table}[t]
\caption{Runtime and peak memory usage per iteration comparison between \name variants and Euclidean counterparts. HELM models are within 1.55X in runtime and 1.11X in memory usage of the Euclidean counterparts for both the 100M and 1B models.}\label{tab: runtime}
\centering
\setlength{\tabcolsep}{7pt}
\begin{tabular}{l c c c}
\toprule
\textbf{Model} & \textbf{\# Params} & \textbf{Runtime} & \textbf{Memory Usage} \\
\midrule
L\!LaMA & 100M & $8.4$s  & $23.8$\,GB \\
\textsc{HELM-D} & 100M & $13.1$s & $24.9$\,GB \\
\textsc{DeepSeekV3} & 100M & $11.0$s & $23.5$\,GB \\
\textsc{HELM-MiCE} & 100M & $16.9$s & $26.3$\,GB \\
\hdashline
\textsc{DeepSeekV3} & 1B & $83.5$s & $33.1$\,GB \\
\textsc{HELM-MiCE} & 1B & $119.4$s & $35.8$\,GB \\
\bottomrule
\end{tabular}
\end{table}

\begin{table}[t]
\caption{Performance comparison between a 115M Hypformer model and \densename, where \densename consistently outperforms the baseline. }\label{table:ablation_hope}
\resizebox{\textwidth}{!}{%
\begin{tabular}{@{}lccccccc@{}}
\toprule
\textbf{Model}                & \textbf{\# Params} & CommonsenseQA & \multicolumn{1}{l}{HellaSwag} & OpenbookQA & MMLU   & ARC-Challenging & \multicolumn{1}{l}{Avg} \\ 
\multicolumn{1}{c}{\textbf{}} &                     & 0-Shot        & 0-Shot                        & 0-Shot     & 5-Shot & 5-Shot          & -                       \\
\midrule
\textsc{Hypformer}   & 115M                & $19.4$        & $25.1$                        & $26.6$     & $22.9$ & $\mathbf{23.6}$          & $23.5$               \\   
\textbf{\textsc{HELM-D} }   & 115M                & $\mathbf{20.3}\pm0.2$        & $\mathbf{25.9}\pm0.1$                        & $\mathbf{27.1}\pm0.4$     & $\mathbf{25.6}\pm0.2$ & $21.4\pm0.3$          & $\mathbf{24.1}\pm0.1$                 \\ \bottomrule
\end{tabular}%
}
\end{table}

\subsection{Runtime and memory usage analysis}
While \name inevitably introduces computational overhead from operations that respect the curvature of the embedding space, our proposed methods are efficient in both runtime and memory usage. HELM models are within 1.55X in runtime and 1.11X in memory usage of the Euclidean counterparts for both the 100M and 1B models. In \cref{tab: runtime}, we show that runtime and peak memory usage comparisons between HELM and the Euclidean baselines for one training iteration (roughly 2M tokens). The results shown are for our exact experimental setup ran on 4 A100 GPUs, where we show the worst runtime and memory usage across the ranks. The results are averaged over 10 runs, and standard deviations are not shown since they are within 0.2 of the results.

\subsection{Comparison with Prior Hyperbolic Language Models}

\begin{wraptable}{r}{0.47\textwidth}
\caption{Machine translation BLEU score. \densename outperforms prior works of hyperbolic Transformers.}
\centering
\begin{resizebox}{0.47\textwidth}{!}{
\begin{tabular}{l c c}
\toprule
\textbf{Model} & \textbf{IWSLT'14} & \textbf{WMT'14} \\
\midrule
HAT~\citet{gulcehre2019hyperbolicAT}      & $23.7$ & $21.8$ \\
HNN++~\citet{HNN++}    & $22.0$ & $25.5$ \\
HyboNet~\citet{chen2021fully}  & $25.9$ & $26.2$ \\
\textbf{\textsc{HELM-D}} & $\mathbf{26.3}$ & $\mathbf{26.5}$ \\
\bottomrule
\end{tabular}
}
\end{resizebox}
\end{wraptable}

We performance additional experiments to compare the \name architecture against prior works of hyperbolic language models and Transformers. As the majority of prior hyperbolic Transformers lacked essential components common in modern LLMs such as LayerNorm or positional encoding, we train a small version of \densename and compare the performance against these models in the machine translation task. The setup of our experiment is identical to that of~\citet{chen2021fully}. Hypformer~\citet{yang2024hypformer}, on the other hand, does possess all components. As a result, we compare the performance of \densename with Hypformer by training a 100M model on the same setup. 

\subsection{Ablation for \textsc{HMLA}}
\begin{table}[t]
\caption{Ablation accuracy, where we compare \moename with a variant using hyperbolic Multi-Head self-Attention instead of \textsc{HMLA}, denoted as \textsc{MiCE-HMHA}. Bolding denotes the highest accuracy and underline denotes the second-highest. Euclidean \textsc{DeepSeekV3} results are shown for reference. Overall, \moename consistently achieves the highest accuracy, while both hyperbolic models still outperform the Euclidean counterpart. }\label{tab:ablation_hmla}
\resizebox{\textwidth}{!}{%
\begin{tabular}{@{}lccccccc@{}}
\toprule
\textbf{Model}                & \textbf{\# Params} & CommonsenseQA & \multicolumn{1}{l}{HellaSwag} & OpenbookQA & MMLU   & ARC-Challenging & \multicolumn{1}{l}{Avg} \\ 
\multicolumn{1}{c}{\textbf{}} &                     & 0-Shot        & 0-Shot                        & 0-Shot     & 5-Shot & 5-Shot          & -                       \\
\midrule
\textsc{DeepSeekV3}                    & 120M                & $19.2$        & $25.2$                        & $23.4$     & $\underline{24.2}$ & $21.8$          & $22.8$                  \\
\hdashline
  \textsc{MiCE-HMHA}   & 120M                &    $\underline{19.3}$    &                  $\underline{25.7}$     &  $\underline{26.0}$   & $23.8$ &   $\mathbf{25.3}$      &        $\underline{23.7}$         \\
textbf{\textsc{HELM-MiCE}}                      & 120M                & $\mathbf{19.7}\pm0.3$        & ${\mathbf{25.9}\pm0.2}$                        & $\mathbf{27.7}\pm0.4$     & $\mathbf{24.4}\pm0.2$ & $\underline{23.2}\pm0.5$          & $\mathbf{24.1}\pm0.1$                  \\\bottomrule
\end{tabular}%
}
\end{table}

\begin{table}[t]
\caption{Ablation accuracy, where we compare \name with a variants using learned relative positional encoding instead of \textsc{HoPE}, denoted as \textsc{\densename-L} and \textsc{\moename-L}. Bolding denotes the highest accuracy and underline denotes the second-highest. Euclidean results are shown for reference. Overall, \moename and \densename consistently achieves the higher accuracy, while both hyperbolic models still outperform the Euclidean counterpart.}\label{table:ablation_hope}
\resizebox{\textwidth}{!}{%
\begin{tabular}{@{}lccccccc@{}}
\toprule
\textbf{Model}                & \textbf{\# Params} & CommonsenseQA & \multicolumn{1}{l}{HellaSwag} & OpenbookQA & MMLU   & ARC-Challenging & \multicolumn{1}{l}{Avg} \\ 
\multicolumn{1}{c}{\textbf{}} &                     & 0-Shot        & 0-Shot                        & 0-Shot     & 5-Shot & 5-Shot          & -                       \\
\midrule
\textsc{LLaMA}     & 115M                & $\mathbf{21.1}$        & $25.3$                        & $25.3$     & $23.8$ & $21.0$          & $23.3$                  \\
\textsc{HELM-D-L}   & 115M                & $19.7$        & $25.5$                        & $\textbf{28.6}$     & $23.0$ & $21.8$          & $23.7$               \\   
\textbf{\textsc{HELM-D} }   & 115M                & $\underline{20.3\pm0.2}$        & $\mathbf{25.9}\pm0.1$                        & $27.1\pm0.4$     & $\mathbf{25.6}\pm0.2$ & $21.4\pm0.3$          & $\mathbf{24.1}\pm0.1$                   \\
\hdashline
\textsc{DeepSeekV3}                    & 120M                & $19.2$        & $25.2$                        & $23.4$     & $24.2$ & $21.8$          & $22.8$                  \\
\textsc{HELM-MiCE-L}                      & 120M                &   $19.0$     &        $25.5$               &     $27.0$ &    23.0  &    $\textbf{25.7}$   &    \underline{24.0}                     \\ 
\textbf{\textsc{HELM-MiCE}}                      & 120M                & ${19.7\pm0.3}$        & ${\mathbf{25.9}\pm0.2}$                        & $\underline{27.7}\pm0.4$     & $\underline{24.4}\pm0.2$ & $\underline{23.2}\pm0.5$          & $\mathbf{24.1}\pm0.1$                          \\ \bottomrule
\end{tabular}%
}
\end{table}

Past works have found that in the Euclidean case, Multi-Head Latent Attention can achieve comparable and in some cases even superior performance compared to regular Multi-Head Attention~\cite{deepseekai2024deepseekv3technicalreport}. Here we assess the effectiveness of \textsc{HMLA} against hyperbolic Multi-Head self-Attention.  We train \moename with the same setup, where we replace the \textsc{HMLA} layers with a hyperbolic Multi-Head self-Attention layer as given in \cref{Eq:hyp_att}. We denote the this model as \textsc{MiCE-HMHA}. The results are shown in \cref{tab:ablation_hmla}. \moename outperforms \textsc{MiCE-HMHA} in 3 out of the 5 tasks, achieving the same accuracy for 1 task, with the \textsc{MiCE-HMHA} achieving better accuracy in the last task. The results demonstrate the effectiveness of \moename while significantly reducing the memory footprint of the KV-cache. Both hyperbolic models still outperform the Euclidean model, demonstrating the effectiveness of \name in general. 

\subsection{Ablation for \textsc{HoPE}}
In this section we assess the effectiveness of \textsc{HoPE} against other hyperbolic positional encoding methods, namely the learned relative positional encoding from Hypformer~\cite{yang2024hypformer}. We devise a variant of \densename, denoted as \textsc{\densename-L} and a variant of \moename, denoted as \textsc{\moename-L}, where each model uses the learned positional encoding instead of \textsc{HoPE}. The results are shown in \cref{table:ablation_hope}. Overall both \moename and \densename outperform their counterparts that use learned positional encoding instead of \textsc{HoPE}. Interestly, however, \moename-L and \densename-L outperformed \moename and \densename respectively on the ARC-Challenging benchmark, possibly due to better alignment with reasoning prompts with non-uniform encodings. Nevertheless, the results demonstrate the effectiveness of \textsc{HoPE} over learned positional encodings in 4 out of the 5 tasks.

\begin{table}[t]
\caption{Multi-choice answer accuracy for \moename 1B and DeepSeekV3 1B across training stages. \moename demonstrates consistent improvement over its Euclidean counter part. }\label{tab:train stages}
\begin{resizebox}{0.98\textwidth}{!}{
\centering
\begin{tabular}{lccccccc}
\toprule
\(\mathbf{\text{Model}}\) & \(\mathbf{\text{Token Count}}\) & \(\mathbf{\text{CommonsenseQA}}\) & \(\mathbf{\text{HellaSwag}}\) & \(\mathbf{\text{OpenbookQA}}\) & \(\mathbf{\text{MMLU}}\) & \(\mathbf{\text{ARC-Challenging}}\) & \(\mathbf{\text{Avg}}\) \\
\midrule
\(\text{DeepseekV3 (1B)}\) & \(\text{4B}\)   & 18.8 & 25.1 & 26.4 & 23.5 & 22.1 & 23.2 \\
\(\mathbf{\text{HELM-MiCE (1B)}}\) & \(\mathbf{4\mathrm{B}}\)   & \(\mathbf{19.5}\) & \(\mathbf{26.0}\) & \(\mathbf{27.0}\) & \(\mathbf{25.6}\) & \(\mathbf{22.9}\) & \(\mathbf{24.3}\) \\
\hdashline
\(\text{DeepseekV3 (1B)}\) & \(\text{4.5B}\) & 19.0 & 26.2 & 27.2 & 23.6 & 22.6 & 23.7 \\
\(\mathbf{\text{HELM-MiCE (1B)}}\) & \(\mathbf{4.5\mathrm{B}}\) & \(\mathbf{19.7}\) & \(\mathbf{26.4}\) & \(\mathbf{27.6}\) & \(\mathbf{25.5}\) & \(\mathbf{23.1}\) & \(\mathbf{24.5}\) \\
\hdashline
\(\text{DeepseekV3 (1B)}\) & \(\text{5B}\)   & 19.5 & 26.2 & 27.4 & 23.6 & 22.7 & 23.9 \\
\(\mathbf{\text{HELM-MiCE (1B)}}\) & \(\mathbf{5\mathrm{B}}\)   & \(\mathbf{19.8}\) & \(\mathbf{26.5}\) & \(\mathbf{28.4}\) & \(\mathbf{25.9}\) & \(\mathbf{23.7}\) & \(\mathbf{24.9}\) \\
\bottomrule
\end{tabular}
}
\end{resizebox}
\end{table}

\subsection{Performance over Training Stages}
While we don't provide standard deviation from multiple runs for the 1B models as is typical for model of this size, we provide the model performance of \moename and DeepSeekV3 1B in \cref{tab:train stages}. \moename demonstrates consistent improvement over its Euclidean counter part, suggesting this improvement could sustain when the training corpus scales.

\end{document}